\algnewcommand{\IfThenElse}[3]{% \IfThenElse{<if>}{<then>}{<else>}
  \State \algorithmicif\ #1\ \algorithmicthen\ #2\ \algorithmicelse\ #3}
\DeclareMathOperator{\rad}{\mathit{rad}}
\DeclareMathOperator{\doracle}{\mathit{Oracle}}
\DeclareMathOperator{\Htilde}{\tilde{\mathbf{H}}}
\DeclareMathOperator{\spp}{\text{spp}}
\newcommand{\distance}[2]{\mathit{dist}({#1},{#2})}
\newcommand{\ceil}[1]{\mathit{ceil}({#1})}
\newcommand{\floor}[1]{\mathit{floor}({#1})}
\newcommand{\width}{\mathit{width}}
\newcommand{\wdiverse}{w_{\textsc{div}}}
\newcommand{\wtop}{w_{\textsc{top}}}
\newcommand{\Mdiverse}{M^*_{\textsc{div}}}
\newcommand{\Mtop}{M^*_{\textsc{top}}}
\newcommand{\Cost}{\mathit{Cost}}
\newcommand{\Gain}{\mathit{Gain}}
\begin{document}

\title{The Diverse Cohort Selection Problem} % put your title here!

\author{Candice Schumann}
\affiliation{%
  \institution{University of Maryland}
%   \city{College Park} 
%   \state{MD} 
%   \postcode{20770}
}
\email{schumann@cs.umd.edu}

\author{Samsara N. Counts}
\affiliation{%
  \institution{George Washington University}
%   \city{Washington} 
%   \state{DC} 
%   \postcode{20052}
}
\email{countss@gwmail.gwu.edu}

\author{Jeffrey S. Foster}
\affiliation{%
  \institution{Tufts University}
%   \city{Medford} 
%   \state{MA} 
%   \postcode{02155}
}
\email{jfoster@cs.tufts.edu}

\author{John P. Dickerson}
\affiliation{%
  \institution{University of Maryland}
%   \city{College Park} 
%   \state{MD} 
%   \postcode{20770}
}
\email{john@cs.umd.edu}

%  \author{%
%   Candice Schumann$^{\star}$ \qquad Samsara N. Counts$^{\dagger}$ \qquad Jeffrey S. Foster$^{\star}$ \qquad John P. Dickerson$^{\star}$ \\
%  $^{\star}$University of Maryland \qquad $^{\dagger}$George Washington University\\
%   \texttt{schumann@cs.umd.edu} \qquad \texttt{countss@gwmail.gwu.edu} \qquad \texttt{\{jfoster,john\}@cs.umd.edu} \\  
%  }
 
\begin{abstract}
How should a firm allocate its limited interviewing resources to select the optimal cohort of new employees from a large set of job applicants?  How should that firm allocate cheap but noisy resume screenings and expensive but in-depth in-person interviews?  We view this problem through the lens of combinatorial pure exploration (CPE) in the multi-armed bandit setting, where a central learning agent performs costly exploration of a set of arms before selecting a final subset with some combinatorial structure.  We generalize a recent CPE algorithm to the setting where arm pulls can have different costs and return different levels of information. We then prove theoretical upper bounds for a general class of arm-pulling strategies in this new setting.  We apply our general algorithm to a real-world problem with combinatorial structure: incorporating diversity into university admissions.  We take real data from admissions at one of the largest US-based computer science graduate programs and show that a simulation of our algorithm produces a cohort with hiring overall utility while spending comparable budget to the current admissions process at that university.

\end{abstract}

\keywords{AAMAS; ACM proceedings; organisations and institutions; social choice theory}
 
\maketitle

\begin{quote}{\small\emph{``It should come as no surprise that more diverse companies and institutions are achieving better performance.'' -- McKinsey \& Company, \emph{Diversity Matters} (2015)}}
\end{quote}

\section{Introduction}\label{sec:introduction}

How should a firm, school, or fellowship committee allocate its limited interviewing resources to select the optimal cohort of new employees, students, or awardees from a large set of applicants?  Here, the central decision maker must first form a belief about the true quality of an applicant via costly information gathering, and then select a subset of applicants that maximizes some objective function.  Furthermore, various types of information gathering can be performed---reviewing a r{\'e}sum{\'e}, scheduling a Skype interview, flying a candidate out for an all-day interview, and so on---to gather greater amounts of information, but also at greater cost.  

% Best practices in interviewing suggest treating all candidates the same as much as possible, which includes asking the same questions, and spending the same amount of time during each interview~\cite{Mayfield64:Selection,Ulrich65:Selection,Schmitt76:Social,Arvey82:Employment,Harris89:Reconsidering,Williamson97:Employment}. These \emph{structured interviews} reduce bias and provide better job success predictors~\cite{Posthuma02:Beyond,Levashina14:Structured}. This also allows for a better interviewing model.

% \jpd{``That in turn suggests that diversity beyond gender and ethnicity/race (such as diversity in age
% and sexual orientation) as well diversity of experience (such as a global mindset and cultural fluency) are also likely to
% bring some level of competitive advantage for firms that are able to attract and retain such diverse talent.'' -- line from the Hunt et al McKinsey document, can use this to say we want to consider lots of different traits}

In this paper, we model the allocation of structured interviewing resources and subsequent selection of a cohort as a combinatorial pure exploration problem in the multi-armed bandit (MAB) setting.  Here, each applicant is an arm, and a decision maker can \emph{pull} the arm, at some cost, to receive a noisy signal about the underlying quality of that applicant.  We further model two different levels of interviews as \emph{strong} and \emph{weak} pulls---the former costing more to perform than the latter, but also resulting in a less noisy signal.  We introduce the strong-weak arm-pulls (SWAP) algorithm, generalizing an algorithm by~\citet{chen2014combinatorial}, and provide theoretical upper bounds for a general class of our various arm-pull strategies. To complement these bounds, we provide simulation results comparing pulling strategies on a toy problem that mimics our theoretical assumptions.

We then validate our proposed method on a real-world scenario: admitting an optimal cohort of graduate students.  We take recent data from one of the largest US-based Computer Science graduate programs---applications including recommendation letters, statements of purpose, transcripts, as well as the department's reviews of applications and final admissions decisions---and run experiments comparing our algorithm's performance under a variety of assumptions to reviews and decisions made in reality. 
We find that our simulation of SWAP produced a cohort with higher top-K utility using equivalent resources as in practice.

We also explore the empirical performance of our algorithm optimizing a nonlinear objective function, motivated by the real-world scenario of admitting a diverse cohort of graduate students. In experiments,  our simulations of SWAP increased a diversity score (over gender and region of origin) with little loss in fit using roughly the same amount of resources as in practice. This gain suggests that SWAP can serve as a useful decision support tool to promote diversity in practice.

%OLD PARAGRAPH
% Complicating matters further, the total utility of a cohort may not be the simple sum of its parts.  For example, with a cohort of size two, a software engineering firm may value two Java developers less than a single Java developer and a web developer. That is, applicants with otherwise similar skillsets or backgrounds may yield diminishing marginal gain as more are added to a cohort.  Optimizing with a nonlinear objective function---in this case, a submodular function---presents additional complexity, even assuming perfect knowledge of the applicants' underlying true utilities.

\section{Related Work}
\label{relatedwork}

The multi-armed bandit (MAB) problem is a classic setting for modeling sequential decision making;~\citet{Bubeck12:Regret} provide an in-depth overview. 
%maybe?
Previous work in the MAB setting has looked at selecting a subset of arms to maximize some objective.
Other work focuses on varied rewards from and costs of pulling arms. To the best of our knowledge, no work operates at the intersection of these two spaces. 
\citet{chen2014combinatorial} provide a general formulation of top-K multi-armed bandits in the combinatorial setting. They provide both a fixed confidence and a fixed budget algorithm.
Our work builds on these contributions by adding varied---in terms of cost and reward---arm pulls.

Several MAB formulations select an optimal subset using a \emph{single} type of arm pull, modeling decisions with focuses on different problem features.
\citet{cao2015top} solve the top-K problem with MABs for linear objectives.
\citet{locatelli2016optimal} address the thresholding bandit problem, finding the arms above and below threshold $\tau$ with precision~$\epsilon$. 
\citet{jun2016top} identify the top-K set while pulling arms in batches.
\citet{singla2015learning} propose an algorithm for crowdsourcing that hires a team for specific tasks, treating types of workers as separate problems and an arm pull as a worker performing an action with uniform cost.
% Modeling workers and tasks in separate graphs, their algorithms use side-observations between graphs to inform optimal team selection. Their algorithm can operate in the \emph{bandit setting} (no side observations) or the \emph{information setting} (fully connected), while ours focuses on the bandit setting.

To select the best subset while satisfying a submodular function, \citet{singla2015noisy} propose an algorithm maximizing an unknown function accessed through noisy evaluations. \citet{radlinski2008learning} learn a diverse ranking from the behavior patterns of different users and then greedily select the next document to rank. They treat each rank as a separate MAB instance, rather than our approach using a single MAB to model the whole system. 
\citet{yue2011linear} introduce the \emph{linear submodular bandits problem} to select diverse sets of content in an online learning setting for optimizing a class of feature-rich submodular utility models. 
%Unlike our focus on self-reported features to prioritize diversity, they categorize arms based on the information they probabilistically contain, given by topic models.
%Like  \citet{radlinski2008learning}, their notion of \emph{diverse} is from the diversified retrieval setting: avoiding redundant coverage of topics in selected arms.
%An Optimal Algorithm for the Thresholding Bandit Problem
%Top Arm Identification in Multi-Armed Bandits with Batch Arm Pulls
%Learning to Hire Teams
%Noisy Submodular Maximization via Adaptive Sampling with Applications to Crowdsourced Image Collection Summarization
%Linear Submodular Bandits and their Application to Diversified Retrieval

We are motivated by the observation that, in many real-world settings, different levels of information gathering can be performed at varying costs. Previous work uses stochastic costs in the MAB setting. However, our costs are fixed for specific types of arm pulls. 
\citet{ding2013multi} look at a MAB problem with variable rewards and cost with budget constraints. When an arm is pulled, a random reward is received, and a random cost is taken from the budget.
Similarly, \citet{xia2016budgeted} propose a batch-arm-pull MAB solution to a problem with variable, random rewards and costs.
\citet{jain2014incentive} use MABs with variable rewards and costs to select individual workers in a crowdsourcing setting. They select workers to do binary tasks with an assured accuracy for each, where workers' costs are unknown. 
%Multi-Armed Bandit with Budget Constraint and Variable Costs
%Budgeted Multi-Armed Bandits with Multiple Plays
%An Incentive Compatible Multi-Armed-Bandit Crowdsourcing Mechanism with Quality Assurance

\citet{LuxUgradSupervisedAdmit} and \citet{WatersGRADE2013} %and  \citet{GuptaGetIn2016}  
use supervised learning to model admissions decisions. They develop accurate classifiers; none decide how to allocate interviewing resources or maximize a certain objective, unlike our aim to select a more diverse cohort via a principled semi-automated system.
%Applications of Supervised Learning Techniques on Undergraduate Admissions Data
%Will I Get in? - Modeling the Graduate Admission Process for American Universities
%DELETED - %GRADE: Machine Learning Support for Graduate Admissions

%repeat
The behavioral science literature shows that scoring candidates via the same rubric, asking the same questions, and spending the same amount of time are interviewing best practices
~\cite{Schmitt76:Social,Arvey82:Employment,Harris89:Reconsidering, Williamson97:Employment}. 
Such \emph{structured interviews} reduce bias and provide better job success predictors
~\cite{Posthuma02:Beyond,Levashina14:Structured}.%Ulrich65:Selection
We incorporate these results into our model through our assumption that we can spend the same budget and get the same information gain across different arms.

\section{Problem Formulation}\label{sec:model}

We now formally describe the stochastic multi-armed bandit setting in which we operate.  For exposition's sake, we do so in the context of a decision-maker reviewing a set of job applicants.
% \footnote{For quick reference, Appendix~\ref{app:variables} enumerates and describes each symbol used in this paper.} 
However, the formulation itself is fully general.
% Separate into two paras again once space matters less.
We represent a set of $n$ applications $A$ as arms $a_i \in A$ for $i \in [n]$. Each arm has a true utility, $u(a_i)\in [0,1]$, which is unknown; an empirical estimate $\hat{u}(a_i)\in [0,1]$ of that underlying true utility; and an uncertainty bound $\rad(a_i)$. Once arm $a_i$ is pulled (e.g., application reviewed or applicant interviewed), $\hat{u}(a_i)$ and $\rad(a_i)$ are updated.

The set of potential \emph{cohorts}, or subsets of arms, is defined by a decision class $\mathcal{M}\subseteq 2^{[n]}$. Note that $\mathcal{M}$ need not be the power set of arms, but can include cardinality and other constraints. The total utility for a cohort is given by some linear function $w:\mathbb{R}^n\times\mathcal{M}\rightarrow\mathbb{R}$ that takes as input the (unknown) true utilities $u(\cdot)$ of the arms and the selected cohort. Throughout the paper, we assume a maximization oracle, defined as $\doracle(\mathbf{v})={\arg\max}_{M\in \mathcal{M}}w(M)$,
where $\mathbf{v}\in\mathbb{R}^n$ is a vector of weights---in this case, estimated or true utilities for each arm. Our overall goal is to accurately estimate the true utilities of arms and then select the optimal subset of arms using the maximization oracle.

% \vspace{-15pt}
\paragraph{Problem hardness.}
Following the notation of~\citet{chen2014combinatorial}, we define a \emph{gap} score for each arm. For each arm $a$ that is in the optimal cohort $M^*$, the gap is the difference in optimality between $M^*$ and the best set without $a$. 
For each arm $a$ that is not in the optimal set $M^*$, the gap is the sub-optimality of the best set that includes  $a$. Formally, the gap is defined as
\begin{equation}\label{eq:gap}
\Delta_a=
    \begin{cases}
    	w(M^*)-\max_{M\in\mathcal{M}:a\in M} w(M) ,& \text{if } a\notin M^*\\
        w(M^*)-\max_{M\in\mathcal{M}:a\notin M} w(M) ,& \text{if } a\in M^*.\\
    \end{cases}
\end{equation}

This gap score serves as a useful signal for problem hardness, which we use in our theoretical analysis.  Formally, the hardness of the problem can be defined as the sum of inverse squared gaps
\begin{equation} \label{eq:hardness}
\mathbf{H}=\sum_{a\in A} \Delta_a^{-2}.
\end{equation}

Chen et al. defined the concept of $\width(\mathcal{M})$. When comparing all combinations of two sets $A,A'\in \mathcal{M}$, where $A \neq A'$, define  $\distance{A}{A'}=|A-A'|+|A'-A|$. Therefore, define  $\width(\mathcal{M}) = \min_{\{A,A' | A,A' \in \mathcal{M} \land A \neq A'\}} \distance{A}{A'}$. In other words, the width is the smallest distance between any two sets in $\mathcal{M}$. See Chen et al. for an in-depth explanation of $\width(\mathcal{M})$.

\paragraph{Strong and weak pulls.}
In reality, there is more than one way to gather information or receive rewards. Therefore, we introduce two kinds of arm pulls which vary  in cost $j$ and information gain $s$. Information gain $s$ is defined as how sure one is the reward is close to the true utility.  We model the information gain as $s$ parallel arm pulls with the resulting rewards being averaged together. A \emph{weak arm pull} has cost $j = 1$ but results in a small amount of information $s =1$. In our domain of graduate admissions, weak arm pulls are standard application reviews, which involve reading submitted materials and then making a recommendation. A \emph{strong arm pull}, in contrast, has cost $j > 1$, but results in $s > 1$ times the information as a weak arm pull. In our domain, strong arm pulls extend reading submitted materials with a structured Skype interview, followed by note-taking and a recommendation.

In our experience, the latter can reduce uncertainty considerably, which we quantify and discuss in Section~\ref{sec:experiments}. However, due to their high cost, such interviews are allocated relatively sparingly.  We formally explore this problem in Section~\ref{sec:alg} and provide an algorithm for selecting which arms to pull, along with nonasymptotic upper bounds on total cost.

\section{SWAP: An Algorithm for Allocating Interview Resources}\label{sec:alg}
In this section, we propose a new multi-armed bandit algorithm, strong-weak arm-pulls (SWAP), that is parameterized by $s$ and $j$. SWAP uses a combination of strong and weak arm pulls to gain information about the true utility of arms and then selects the optimal cohort. Our setting and the algorithm we present generalize the CLUCB algorithm proposed by~\citet{chen2014combinatorial}, which can be viewed as a special case with $s=j=1$.

\newlength\myindent % define a new length \myindent
\setlength\myindent{6em} % assign the length 2em to \myindet
\newcommand\bindent{%
  \begingroup % starts a group (to keep changes local)
  \setlength{\itemindent}{\myindent} % set itemindent (algorithmic internally uses a list) to the value of \mylength
  \addtolength{\algorithmicindent}{\myindent} % adds \mylength to the default indentation used by algorithmic
}
\newcommand\eindent{\endgroup} % closes a group

\begin{algorithm}[ht]
\caption{Strong Weak Arm Pulls (SWAP)}\label{alg:swap}
\begin{algorithmic}[1]
\Require Confidence $\delta\ \in (0,1)$; Maximization oracle: $\doracle(\cdot):\mathbb{R}^n \rightarrow \mathcal{M}$
\State Weak pull each arm $a \in [n]$ once to initialize empirical means $\hat{\mathbf{u}}_n$
\State $\forall i \in [n]$ set $T_n(a_i)\gets 1$,
\State $\Cost_n \gets n$, total resources spent
\For{$t=n,n+1,\ldots$}
\State $M_t \gets \doracle(\hat{\mathbf{u}}_t)$
\For{$a_i = 1,\ldots,n$}
\State $\rad_t(a_i)=\sigma\sqrt{2\log\left(\frac{4n\Cost^3_t}{\delta}/T_t(a_i)\right)}$
\If {$a_i \in M_t$} 
\State $\tilde{u_t}(a_i) \gets \hat{u}_t(a_i) - \rad_t(a_i)$
\Else {} 
\State $\tilde{u_t}(a_i) \gets \hat{u}_t(a_i) + \rad_t(a_i)$
\EndIf
\EndFor
\State $\tilde{M}_t \gets \doracle(\tilde{\mathbf{u}}_t)$
\If {$w(\tilde{M}_t) = w(M_t)$}\label{line:equality}
\State $\texttt{Out} \gets M_t$
\State \Return $\texttt{Out}$
\EndIf
\State $p_t \gets \arg\max_{a \in (\tilde{M}_t \setminus M_t) \cup (M_t \setminus \tilde{M}_t)}\rad_t(a)$
\State $\alpha \gets spp(s,j)$
\State \textbf{with} probability $\alpha$ \textbf{do}
\State \hspace{\algorithmicindent} Strong pull $p_t$
\State \hspace{\algorithmicindent} $T_{t+1}(p_t) \gets T_t(p_t)+s$
\State \hspace{\algorithmicindent}$\Cost_{t+1} \gets \Cost_t+j$
\State \textbf{else}
\State \hspace{\algorithmicindent} Weak pull $p_t$
\State \hspace{\algorithmicindent} $T_{t+1}(p_t) \gets T_t(p_t)+1$
\State \hspace{\algorithmicindent} $\Cost_{t+1} \gets \Cost_t+1$
\State Update empirical mean $\hat{\mathbf{u}}_{t+1}$ using observed reward
\State $T_{t+1}(a) \gets T_t(a)\ \forall a \neq p_t$

\EndFor
\end{algorithmic}
\end{algorithm}

Algorithm \ref{alg:swap} gives pseudocode for SWAP.  It starts by weak pulling all arms once to initialize an empirical estimate of the true underlying utility of each arm.  It then iteratively pulls arms, chooses to weak or strong pull based on a general strategy, updates empirical estimates of arms, and terminates with the optimal (i.e., objective-maximizing) subset of arms with probability $1-\delta$, for some user-supplied parameter $\delta$.  

During each iteration $t$, SWAP starts by finding the set of arms $M_t$ that, according to current empirical estimates of their means, maximizes the objective function via an oracle. It then computes a confidence radius, $\rad_t(a)$, for each arm $a$ and estimates the worst-case utility of that arm with the corresponding bound. If an arm $a$ is in the set $M_t$ then the worst case is when the true utility of $a$ is less than our estimate ($a$ might not be in the true optimal set $M^*$). Alternatively, if an arm is not in the set $M_t$ then the worst case is when the true utility of $a$ is greater than our estimate ($a$ might be in the true optimal set $M^*$).
Using the worst-case estimates, SWAP computes an alternate subset of arms  $\tilde{M}_t$. If the utility of the initial set $M_t$ and the worst-case set $\tilde{M}_t$ are equal, then SWAP terminates with output $M_t$, which is correct with probability $1-\delta$ as we show in Theorems \ref{thm:strong_only} and \ref{thm:swap}. If $w(M_t)$ and $w(\tilde{M}_t)$ differ, SWAP looks at a set of candidate arms in the symmetric difference of $M_t$ and $\tilde{M}_t$ and chooses the arm $p_t$ with the largest uncertainty bound $\rad_t(p_t)$.

SWAP then chooses to either strong or weak pull the selected arm $p_t$ using a \emph{strong pull policy}, depending on parameters $s$ and $j$. A strong pull policy is defined as $\mathit{spp} : \mathbb{R} \geq 1 \times (\mathbb{R} \geq 1) \rightarrow [0,1]$. For example, in the experiments in Section~\ref{sec:experiments}, we use the following pull policy:
\begin{equation}\label{eq:spp}
\mathit{spp}(s,j)=\frac{s-j}{s-1}.
\end{equation}

%Notice that this pull policy performs \emph{only} strong pulls when $j=1$ and $s>j$, and performs \emph{only} weak pulls when $j=s$.  For other values of $s$ and $j$, it probabilistically chooses a type of arm pull.  
This policy tries to balance information gain and cost. When the strong pull gain is high relative to cost then many more strong pulls will be performed. When the weak pull gain is low relative to cost then fewer strong pulls will be performed, as discussed in Example~\ref{ex:swap}.

Once an arm is pulled, the empirical mean $\hat{u}_{t+1}(p_t)$ and the information gain $T_{t+1}(p_t)$ is updated. A reward from a strong arm is counted $s$ times more than a weak pull.

\begin{example}\label{ex:swap}
Suppose we wish to find a cohort of size $K=2$ from three arms $A = \{a_1,a_2,a_3\}$. Run SWAP for $t$ iterations. Figure~\ref{fig:ex_swap} shows that SWAP maintains empirical utilities $\hat{u}_t(\cdot)$ and uncertainty bounds $\rad_t(\cdot)$.
In this case $M=\{a_1,a_2\}$ and $\tilde{M}=\{a_1,a_3\}$. Arm $a_3$, therefore, is the arm in the symmetric difference $\{a_2,a_3\}$ with the highest uncertainty, which therefore needs to be pulled. Further, assume that $a_3$ needs $x$ information gain for SWAP to end. When $j=1$ and $s=1$, the best pulling strategy would be to weak pull $a_3$ for $x$ times. When $j=1$ and $s=y$ where $y>1$, the best pulling strategy would be to strong pull $a_3$ for $\ceil{\frac{x}{y}}$ times. Finally when $j=z$ and $s=y$ where $y>z>1$, the best pulling strategy would be to strong pull $a_3$ for $\floor{\frac{x}{y}}+\mathbf{1}[z-(x \mod y)]$ times and weak pull $a_3$ for $\mathbf{1}[z-(x \mod y)]*(x \mod y)$ times, where $\mathbf{1}[a]=1$ when $a\geq 0$ and $0$ otherwise. In reality, we do not know how many times an arm needs to be pulled, which is why we introduce a probabilistic strong pull policy, like that in Equation~\ref{eq:spp}.
\end{example}

\begin{figure}
  \centering
  \includegraphics[width=0.43\columnwidth]{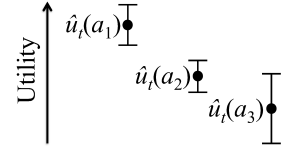}
  \caption{Example with $n=3$ after running SWAP for $t$ steps. Dots are the empirical utility $u_t(a)$ while flags represent the radius of confidence $\rad_t(a)$. Here, $\rad_t(a_2)$ and $\rad_t(a_3)$ overlap; SWAP may pull $a_3$.}
  \label{fig:ex_swap}
\end{figure}

% \vspace{-15pt}
\paragraph*{Analysis.}
We now formally analyze SWAP. We define $\bar{X}_{\Cost}=E[\Cost]$ as the expected cost (or expected $j$ value) and $\bar{X}_{\Gain}=E[\Gain]$ as the expected gain (or the expected $s$ value).
Assume that each arm $a \in [n]$ has mean $u(a)$ with an $\sigma$-sub-Gaussian tail. Following Chen et al., %\citet{chen2014combinatorial}
set $\rad_t(a)=\sigma\sqrt[]{2\log\left(\frac{4n\Cost_t^3}{\delta}\right)/T_t(a)}$ for all $t>0$.

Notice that if we use strong pull policy $\mathit{spp}(s,j)=0$, then we only perform weak arm pulls, and SWAP reduces to Chen et al.'s CLUCB. We call this reduction the \emph{weak only pull problem}.
Chen et al. proved that CLUCB returns the optimal set $M^*$ and uses at most $\tilde{O}(\width(\mathcal{M})^2\mathbf{H})$ samples.
Similarly, if we set $spp(s,j)=1$ then we only perform strong arm pulls---dubbed the \emph{strong only pull problem}. We show that this version of SWAP returns the optimal set $M^*$ and costs at most $\tilde{O}(\width(\mathcal{M})^2\mathbf{H}/s)$.

\begin{theorem} \label{thm:strong_only}
Given any $\delta \in (0,1)$, any decision class $\mathcal{M}\subseteq 2^{[n]}$, and any expected rewards $\mathbf{u} \in \mathbb{R}^n$, assume that the reward distribution $\varphi_a$ for each arm $a\in[n]$ has mean $u(a)$ with an $\sigma$-sub-Gaussian tail. Let $M^*=\arg\max_{M\in\mathcal{M}}w(M)$ denote the optimal set. Set $\rad_t(a)=\sigma\sqrt[]{2\log\left(\frac{4nt^3j^3}{\delta}\right)/T_t(a)}$ for all $t>0$ and $a\in[n]$. Then, with probability at least $1-\delta$, the SWAP algorithm with only strong pulls where $j\geq1$ and $s>j$ returns the optimal set $\texttt{Out}=M^*$ and
\begin{align} \label{eq:strong_theorembound}
T\leq O\left(\frac{\sigma^2\width(\mathcal{M})^2\mathbf{H}\log(nj^3\sigma^2\mathbf{H}/\delta)}{s}\right)
\end{align}
where $T$ denotes the total cost used by the SWAP algorithm and $\mathbf{H}$ is defined in Eq.\ref{eq:hardness}.
\end{theorem}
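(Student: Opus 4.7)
The plan is to adapt the analysis of CLUCB from~\citet{chen2014combinatorial} to the strong-only regime, handling two technical differences: (i) the confidence radius uses the accumulated cost $\Cost_t$ instead of the iteration count $t$, and (ii) each pull advances the counter $T_t(a)$ by $s$ at a cost of $j$. Accordingly I will organize the proof in two stages: correctness on a high-probability "good event," and a cost bound that follows from lower-bounding the information needed per arm before termination.

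First I would define the good event $\mathcal{E} = \{\,\forall t \geq n,\ \forall a \in [n]:\ |\hat{u}_t(a) - u(a)| \leq \rad_t(a)\,\}$. Using the $\sigma$-sub-Gaussian assumption, a standard Hoeffding-type inequality applied to the (possibly weighted) average after $T_t(a)$ units of information gives, for any fixed $t$ and $a$, a failure probability at most $\exp\!\left(-\tfrac{\rad_t(a)^2 T_t(a)}{2\sigma^2}\right) = \tfrac{\delta}{4n\Cost_t^3}$. Taking a union bound over arms and over iterations, and bounding $\sum_{t\geq n} \Cost_t^{-3} = O(1)$ (which holds because each iteration increments $\Cost_t$ by at least $1$, so $\Cost_t \geq t$), I get $\Pr[\mathcal{E}] \geq 1 - \delta$. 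Under $\mathcal{E}$, correctness follows from the same combinatorial argument as in Chen et al.: if the algorithm terminates at step $t$ with $w(\tilde M_t) = w(M_t)$, then assuming $M_t \neq M^*$ leads to a contradiction because $\tilde M_t$, built from the pessimistic-for-$M_t$ estimates $\tilde u_t$, would certify $w(\tilde M_t) > w(M_t)$. I would import this step essentially verbatim, noting it depends only on the validity of the confidence intervals and the oracle, not on the pull model.

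Next I would bound the cost. Conditional on $\mathcal{E}$ and the algorithm not having terminated at step $t$, the selected arm $p_t \in (\tilde M_t \setminus M_t) \cup (M_t \setminus \tilde M_t)$ satisfies $\rad_t(p_t) \geq \Delta_{p_t}/\width(\mathcal{M})$, via Lemma-style reasoning from Chen et al.\ that relates the gap of an arm in the symmetric difference to the width of $\mathcal{M}$. Inverting the definition of $\rad_t$ yields
\begin{equation*}
T_t(p_t) \leq \frac{2\sigma^2 \width(\mathcal{M})^2 \log(4n\Cost_t^3/\delta)}{\Delta_{p_t}^2}.
\end{equation*}
Since SWAP only increases $T_t(p_t)$ by $s$ per strong pull, summing the number of strong pulls on each arm and using $\sum_a \Delta_a^{-2} = \mathbf{H}$ gives a bound on the total number of strong pulls of the form $O(\sigma^2 \width(\mathcal{M})^2 \mathbf{H} \log(n\Cost^3/\delta)/s)$. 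Multiplying by the per-pull cost $j$ (absorbed into the $\log$ as $j^3$ via $\Cost_t \leq tj$) and solving the resulting implicit inequality $\Cost \lesssim \log(\Cost)$ for $\Cost$ yields the claimed bound $T \leq O(\sigma^2 \width(\mathcal{M})^2 \mathbf{H} \log(nj^3\sigma^2 \mathbf{H}/\delta)/s)$.

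The main obstacle I expect is the bookkeeping around the implicit bound: because $\Cost_t$ appears both inside the $\log$ (through the confidence radius) and as the quantity we are bounding, I must unwind a self-referential inequality of the form $\Cost \leq C \log(j^3 \Cost/\delta)$ and show that it collapses to the stated closed form with only logarithmic overhead. A secondary subtlety is ensuring the union bound over $t$ is legitimate despite $\Cost_t$ being random in general; here the strong-only assumption simplifies matters because $\Cost_t - \Cost_{t-1} = j$ deterministically, so the indexing is by iteration and the series $\sum t^{-3}$ remains summable. Once these two points are handled cleanly, the remainder is a careful combination of Chen et al.'s width-based arm-selection lemma with the rescaling $T \mapsto T/s$ induced by strong pulls.
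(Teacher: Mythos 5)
Your proposal follows essentially the same route as the paper's own proof: a Hoeffding-plus-union-bound "good event" with the $j^3$ factor entering through the confidence radius (the paper's adaptation of Chen et al.'s Lemma 8, summing over the possible values $T_t(a)=bs$), correctness imported verbatim from Chen et al.'s Lemma 9, a per-arm information bound $T(a)\lesssim \sigma^2\width(\mathcal{M})^2\Delta_a^{-2}\log(4nT^3j^3/\delta)$ from the width lemma at the last pull of each arm, and finally summing over arms, dividing by $s$, and unwinding the self-referential inequality (which the paper does via a proof by contradiction with the explicit constant $C\le 499$). The two obstacles you flag -- the implicit $\Cost\lesssim\log(\Cost)$ bound and the legitimacy of the union bound given that $\Cost_t=tj$ deterministically in the strong-only case -- are exactly the points the paper's appendix handles, so the plan is sound and matches.
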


Although $s$ and $j$ are problem-specific, it is important to know when to use the strong only pull problem over the weak only pull problem. Corollary \ref{cor:strong_better_than_weak} provides weak bounds for $s$ and $j$ for the strong only pull problem. We also explore its ramifications experimentally in Figure~\ref{fig:cor} as discussed in Section~\ref{sec:experiments-sim}.
\begin{corollary}\label{cor:strong_better_than_weak}
SWAP with only strong pulls is equally or more efficient than SWAP with only weak pulls when $s>0$ and $0<j\leq C^{\frac{s}{3}-\frac{1}{3}}$ where $C=4n\tilde{\mathbf{H}}/\delta$.
\end{corollary}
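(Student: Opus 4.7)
The plan is to compare the two bounds head-to-head and solve for the regime where the strong-only bound dominates. From the CLUCB analysis of Chen et al., which is the $s = j = 1$ reduction of SWAP, the weak-only pull problem terminates with total cost at most $O(\sigma^2 \width(\mathcal{M})^2 \mathbf{H} \log(n \sigma^2 \mathbf{H}/\delta))$ with probability $1 - \delta$. Theorem~\ref{thm:strong_only} gives the strong-only bound $O(\sigma^2 \width(\mathcal{M})^2 \mathbf{H} \log(n j^3 \sigma^2 \mathbf{H}/\delta)/s)$. Writing $\tilde{\mathbf{H}}$ for the common $\sigma^2 \mathbf{H}$ factor inside the logarithm and $C := 4n\tilde{\mathbf{H}}/\delta$ for the shared constant (matching the statement of the corollary), both bounds have the same prefactor $\sigma^2 \width(\mathcal{M})^2 \mathbf{H}$, so efficiency reduces to comparing the two remaining logarithmic terms.

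Concretely, I would ask when
\begin{equation*}
\frac{\log(j^3 C)}{s} \;\leq\; \log(C),
\end{equation*}
which is the inequality that makes the strong-only cost no larger than the weak-only cost. Rearranging, this becomes $\log(j^3) + \log(C) \leq s \log(C)$, i.e., $3 \log(j) \leq (s-1)\log(C)$, so that
\begin{equation*}
j \;\leq\; C^{(s-1)/3} \;=\; C^{s/3 - 1/3},
\end{equation*}
which is exactly the stated constraint. The condition $s > 0$ together with $j \geq 1$ is then used to ensure the manipulations are valid, and the boundary case $s = 1$ collapses to $j \leq 1$, recovering the weak-only regime as equality, consistent with the ``equally or more efficient'' wording.

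The main obstacle, and really the only subtle point, is bookkeeping around the big-$O$: the two bounds have to be written with matching constants and matching $\tilde{\mathbf{H}}$ definitions before the logarithms can be compared term-by-term. In particular, one must check that the $\log$ factors in CLUCB's bound and in Theorem~\ref{thm:strong_only} use the same normalization of $n$, $\sigma^2 \mathbf{H}$, and $\delta$ (the only genuine difference being the extra $j^3$ coming from the $\Cost_t^3$ term inside $\rad_t(a)$ when costs are $j$ per pull). Once that alignment is done, the corollary is just the algebraic rearrangement above; there is no additional probabilistic argument needed, since the high-probability correctness of both algorithms is already established by Theorem~\ref{thm:strong_only} and its CLUCB analogue.
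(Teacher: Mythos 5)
Your proposal matches the paper's own proof: the paper likewise sets the explicit strong-only bound $\frac{499\tilde{\mathbf{H}}\log(Cj^3)}{s}+2n$ against the weak-only bound $499\tilde{\mathbf{H}}\log(C)+2n$, cancels the common prefactor, and solves $\frac{\log(Cj^3)}{s}\leq\log(C)$ to obtain $j\leq C^{\frac{s}{3}-\frac{1}{3}}$. The only cosmetic difference is that the paper compares the non-asymptotic bounds with the explicit constant $499\tilde{\mathbf{H}}$ rather than reasoning about alignment of the big-$O$ expressions, but the algebra is identical.
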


We now address the general case of SWAP, for any probabilistic strong pull policy parameterized by $s$ and $j$. In Theorem~\ref{thm:swap} we show that SWAP returns $M^*$ in $\tilde{O}\left(\width(\mathcal{M})^2\mathbf{H}/\bar{X}_{\Gain}\right)$ samples.

\begin{theorem}\label{thm:swap}
Given any $\delta_1,\delta_2,\delta_3 \in (0,1)$, any decision class $\mathcal{M}\subseteq 2^{[n]}$, and any expected rewards $\mathbf{u} \in \mathbb{R}^n$, assume that the reward distribution $\varphi_a$ for each arm $a\in [n]$ has mean $u(a)$ with an $\sigma$-sub-Gaussian tail. Let $M^*=\arg\max_{M\in\mathcal{M}}w(M)$ denote the optimal set. Set $\rad_t(a)=\sigma\sqrt[]{2\log\left(\frac{4n\Cost_t^3}{\delta}\right)/T_t(a)}$ for all $t>0$ and $a\in[n]$, set $\epsilon_1=\sigma\sqrt[]{2\log\left(\frac{1}{2}\delta_2/T\right)}$, and set $\epsilon_2=\sigma\sqrt[]{2\log\left(\frac{1}{2}\delta_3/n\right)}$. Then, with probability at least $(1-\delta_1)(1-\delta_2)(1-\delta_3)$, the SWAP algorithm (Algorithm \ref{alg:swap}) returns the optimal set $\texttt{Out}=M^*$ and
\begin{equation}
\label{eq:swap_thm}
T\leq O\left(\frac{\sigma^2\width(\mathcal{M})^2\mathbf{H}\log\left(n\sigma^2\left(\bar{X}_{\Cost}-\epsilon_1\right)^3\mathbf{H}/\delta_1\right)}{\bar{X}_{\Gain}-\epsilon_2}\right),
\end{equation}
where $T$ denotes the total cost used by Algorithm \ref{alg:swap}, and $\mathbf{H}$ is defined in Eq. \ref{eq:hardness}.
\end{theorem}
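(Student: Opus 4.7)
The plan is to mimic the structure of Theorem~\ref{thm:strong_only} (which itself mirrors the CLUCB analysis of Chen et al.) while cleanly decoupling the two independent sources of randomness in SWAP: the stochastic arm rewards, which govern correctness, and the Bernoulli coin flips inside the strong pull policy $\mathit{spp}(s,j)$, which govern the conversion between iteration count, cumulative information $\sum_a T_t(a)$, and cumulative cost $\Cost_t$. I would introduce three high-probability events. First, event $\mathcal{E}_1$ asks that the confidence bands be simultaneously valid, $|\hat{u}_t(a)-u(a)|\leq \rad_t(a)$ for every arm $a$ and every round $t$; a standard sub-Gaussian tail bound, union-ed over $a$, $T_t(a)$, and the possible values of $\Cost_t$ appearing inside $\rad_t$, gives $\Pr[\mathcal{E}_1]\geq 1-\delta_1$, exactly as in Theorem~\ref{thm:strong_only}. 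Second, event $\mathcal{E}_2$ controls the empirical cost per iteration via the stated $\epsilon_1$: if $X_{\Cost,i}\in\{1,j\}$ is the cost incurred at iteration $i$, a Hoeffding-style bound on the bounded, i.i.d.\ Bernoulli draws made by $\mathit{spp}$ gives $\sum_i X_{\Cost,i}\geq N(\bar{X}_{\Cost}-\epsilon_1)$ with probability $1-\delta_2$. Third, event $\mathcal{E}_3$ is the analogue for the information gain $X_{\Gain,i}\in\{1,s\}$, giving $\sum_i X_{\Gain,i}\geq N(\bar{X}_{\Gain}-\epsilon_2)$ with probability $1-\delta_3$. Because the $\mathit{spp}$ coin flips are independent of the reward noise, $\Pr[\mathcal{E}_1\cap\mathcal{E}_2\cap\mathcal{E}_3]\geq(1-\delta_1)(1-\delta_2)(1-\delta_3)$.

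On $\mathcal{E}_1$ correctness is essentially inherited: the termination test $w(\tilde{M}_t)=w(M_t)$ is identical to CLUCB's, and the same exchange argument used inside Theorem~\ref{thm:strong_only} shows that when it fires $\texttt{Out}=M^*$, while otherwise the symmetric-difference arm $p_t$ carries an unresolved confidence interval that shrinks on its next pull. The CLUCB-style gap argument then upper-bounds the total information accumulated at termination by $\sum_a T_t(a) \leq O\!\left(\sigma^2\width(\mathcal{M})^2\mathbf{H}\log(4n\Cost_t^3/\delta_1)\right)$. This is the key structural inequality and it is identical in form to the one inside the proof of Theorem~\ref{thm:strong_only}; the only change is that $T_t(a)$ is now incremented by $s$ on a strong pull and by $1$ on a weak pull, instead of uniformly by $s$.

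To turn the above into the stated bound on $T=\Cost_t$, I would invoke $\mathcal{E}_3$ to replace $\sum_a T_t(a) = n+\sum_i X_{\Gain,i}$ by its lower bound $N(\bar{X}_{\Gain}-\epsilon_2)$, and invoke $\mathcal{E}_2$ to replace $\Cost_t=n+\sum_i X_{\Cost,i}$ by $N(\bar{X}_{\Cost}-\epsilon_1)$ inside the logarithm. Solving the resulting self-referential inequality for $N$ (the standard $x\leq a\log x^3 \Rightarrow x=O(a\log a)$ step used by Chen et al.) and then collecting universal constants yields Eq.~\ref{eq:swap_thm}.

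The main obstacle is the measurability issue that $N$ is itself a stopping time depending on the very coin flips that concentration is supposed to control, so naive Hoeffding, which applies only at a fixed horizon, is not directly legitimate. I expect to handle this with a union bound over a geometric grid of candidate values $N\in\{2^k\}_{k\geq 0}$, paying only an extra $\log N$ factor that is absorbed into the $\log\!\left(n\sigma^2(\bar{X}_{\Cost}-\epsilon_1)^3\mathbf{H}/\delta_1\right)$ term, or equivalently by a peeling/maximal-inequality argument such as a uniform Azuma bound. A secondary subtlety is that the arm chosen at iteration $i$ is itself random, so $X_{\Gain,i}$ and $X_{\Cost,i}$ are i.i.d.\ only conditionally on the filtration of prior pulls; fortunately the $\mathit{spp}$ coin flip at round $i$ is independent of everything preceding it, so a martingale/Azuma bound substitutes cleanly for Hoeffding throughout.
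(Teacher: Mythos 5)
Your proposal follows essentially the same route as the paper's proof: the same three-event decomposition (simultaneous sub-Gaussian confidence bands holding with probability $1-\delta_1$, Hoeffding concentration of the per-round cost and information gain holding with probabilities $1-\delta_2$ and $1-\delta_3$, multiplied together by independence of the $\mathit{spp}$ coin flips), the same CLUCB gap/width argument applied at each arm's last pull, and the same self-referential inequality resolved by the $C>499$ contradiction. The only substantive difference is that you explicitly flag, and propose to repair via a geometric-grid union bound or Azuma, the stopping-time subtlety in applying Hoeffding at the random horizon $T$ --- a gap the paper's own proof passes over silently --- so your write-up is, if anything, more careful on that point.
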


It is nontrivial to determine where the general version of SWAP is better than both the SWAP algorithm with only strong pulls and the SWAP algorithm with only weak pulls, given the non-asymptotic nature of all three bounds (Chen et al. results and Theorems~\ref{thm:strong_only} and~\ref{thm:swap}).  Based on our experiments (\S\ref{sec:experiments}), we conjecture that there is a of $s$ and $j$ pairs where SWAP is the optimal algorithm, even for relatively low numbers of arm pulls, though it is problem-specific. This is discussed more in Section~\ref{sec:mech-design}.
% \cs{Again reference to mechanism design is needed when talking about problem-specific stuff}

\section{Top-K Experiments}\label{sec:experiments}

In this section, we experimentally validate the SWAP algorithm under a variety of arm pull strategies.
We first explore (\S\ref{sec:experiments-sim}) the efficacy of our bounds in Theorem~\ref{thm:swap} and Corollary~\ref{cor:strong_better_than_weak} in simulation.
Then we deploy SWAP on real data (\S\ref{sec:linear_experiments_real}) drawn from one of the largest computer science graduate programs in the United States. We show that SWAP provides a higher overall utility with equivalent cost to the actual admissions process.

\subsection{Gaussian Arm Experiment}\label{sec:experiments-sim}
We begin by validating the tightness of our theoretical results in a simulation setting that mimics the assumptions made in Section~\ref{sec:alg}.   We pull from a Gaussian distribution around each arm. When arm $a$ is weak pulled, a reward is pulled from a Gaussian distribution with mean $u_a$, the arm's true utility, and standard deviation $\sigma$. Similarly, when arm $a$ is strong pulled, the algorithm is charged $j$ cost, and a reward is pulled from a distribution with mean $u_a$ and standard deviation $\sigma/\sqrt[]{s}$. This strong pull distribution is equivalent to pulling the arm $s$ times and averaging the reward, thus ensuring an information gain of $s$.

We ran all three algorithms---SWAP with the strong pull policy defined in Equation~\ref{eq:spp}, SWAP with only strong pulls, and SWAP with only weak pulls---while varying $s$ and $j$. For each $s$ and $j$ pair we ran the algorithms at least $4,000$ times with a randomly generated set of arm values. Random seeds were maintained across policies. We then compared the cost of running each of the algorithms.\footnote{All code to replicate this experiment can be found here: \url{https://github.com/principledhiring/SWAP}.}

\begin{figure}
\centering
\includegraphics[width=0.65\columnwidth]{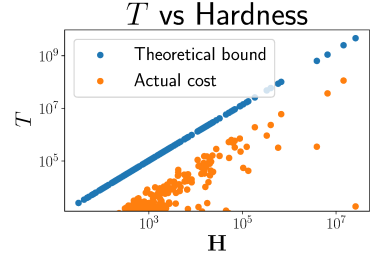}
\vspace{-10pt}
\caption{Exploration of bounds in practice vs. the theoretical bounds of Theorem~\ref{thm:swap} with respect to hardness (note that both axes are a log scale).}
\label{fig:linear_hardness}
\end{figure}

To test Corollary~\ref{cor:strong_better_than_weak}, Figure~\ref{fig:cor} compares SWAP with only weak pulls to SWAP with only strong pulls.  We found that Corollary~\ref{cor:strong_better_than_weak} is a weak bound on the boundary value of $j$.
%redundant
% Yet, SWAP with only strong pulls has a lower cost than SWAP with only weak pulls for values greater than $j<(4n\Htilde/\delta)^{\frac{s}{3}-\frac{1}{3}}$, given this experiment.
The general version of SWAP should be used when it performs better---costs less---than both the strong only and weak only versions of SWAP. The zone where SWAP is effective varies with the problem (See \S\ref{sec:mech-design} for a deeper discussion). Figure~\ref{fig:opt_zone_gaussian} shows the optimal zone for the Gaussian Arm Experiment.

% \begin{figure}
% \includegraphics[width=0.9\columnwidth]{opt.png}
% \centering
% \caption{Heat map showing where the general version of SWAP outperformed (green) both SWAP with only strong pulls as well as SWAP with only weak pulls, and (maroon) where at least one of the latter outperformed.}
% \label{fig:opt_zone_gaussian}
% \end{figure}

\begin{figure}
\centering
  \begin{subfigure}[b]{0.5\columnwidth}
%     \hspace{5pt}
    \includegraphics[width=\textwidth]{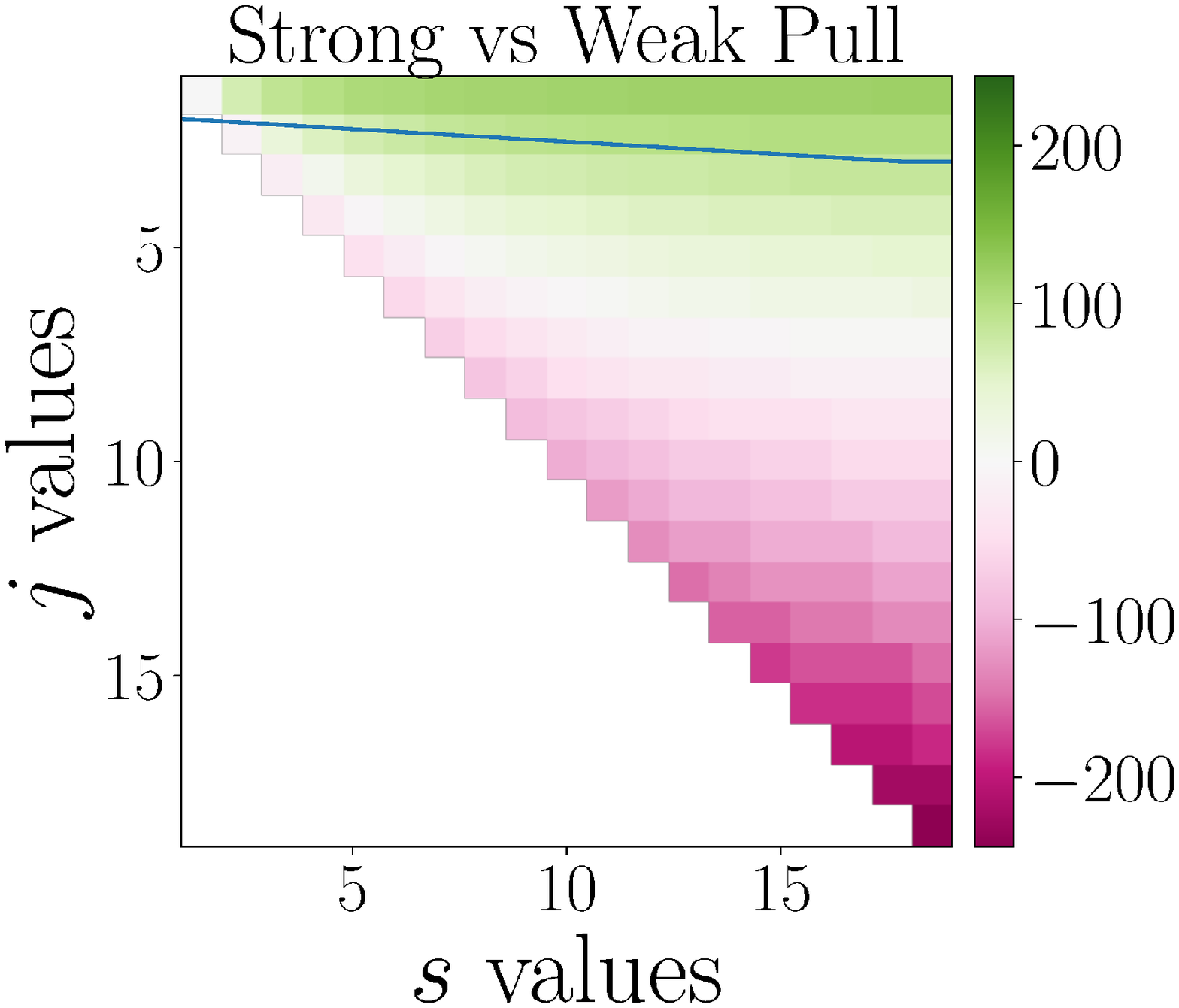}
    \vspace{-20pt}
    \caption{Weak vs Strong}
    \label{fig:cor}
  \end{subfigure}
  \begin{subfigure}[b]{0.45\columnwidth}
%   	\vspace{-20pt}
    \includegraphics[width=\textwidth]{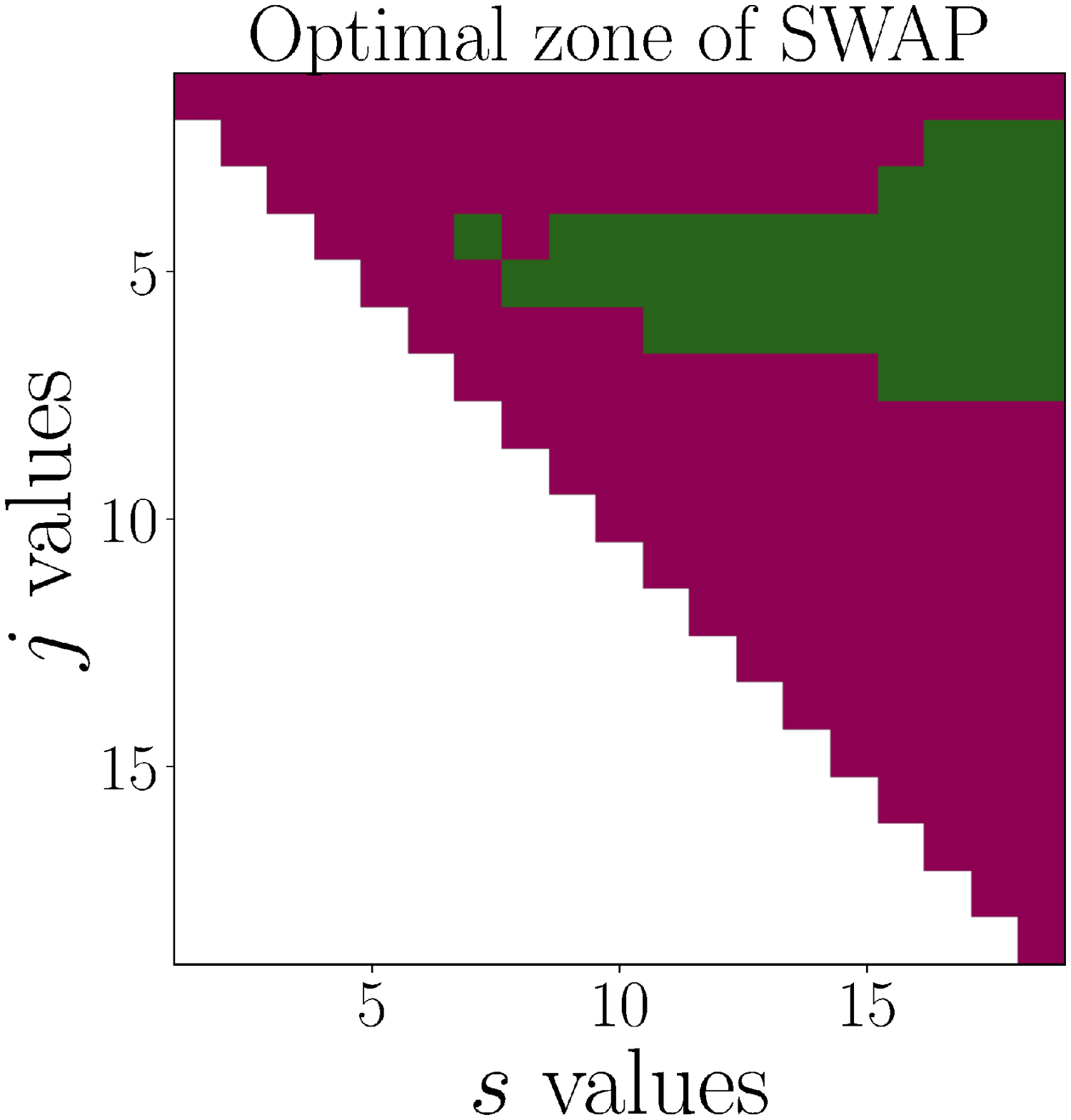}
    \caption{SWAP Optimal Zone}
    \label{fig:opt_zone_gaussian}
  \end{subfigure}
  \caption{Cost comparisons. Figure \ref{fig:cor} compares only strong to only weak pulls. Green indicates better performance by strong pulls, and intensity indicates magnitude. The blue line is the Corollary~\ref{cor:strong_better_than_weak} bound on $j$. Figure \ref{fig:opt_zone_gaussian} shows where the general version of SWAP outperformed (green) both SWAP with only strong pulls as well as SWAP with only weak pulls, and (maroon) where it outperformed at least one of the latter.}
  \label{fig:rand_uni}
  \vspace{-10pt}
\end{figure}

\subsection{Graduate Admissions Experiment}\label{sec:linear_experiments_real}

Finally, we describe a preliminary exploration of SWAP on real graduate admissions data from one of the largest CS graduate programs in the United States. The experiment was approved by the university's Institutional Review Board. Our dataset consists of three years of graduate admissions applications, graduate committee application review text and ratings, and final admissions decisions. Information was gathered from the first two academic years (treated as a training set), while the data from last academic year was used to evaluate the performance of SWAP (treated as a test set).

% \vspace{-15pt}
\paragraph{Dataset.}
During the admissions process, potential students from all over the world send in their applications. A single application consists of quantitative information such as GPA, GRE scores, TOEFL scores, nationality, gender, previous degrees and so on, as well as qualitative information in the form of recommendation letters and statements of purpose.  In the 2016-17 academic year, the department received approximately 1,600 applications, with roughly 4,500 applications over all three years. The most recent 1,600 applications are roughly split into 1,000 Master's applications and 600 Ph.D.\ applications. The acceptance rate is 3\% for Masters students  and 20\% for Ph.D. students.

Once all applications are submitted, they are sent to a review committee. 
% \cs{move the following to results for comparison - no need to state it twice. maybe?}
% Each application is reviewed at least once; for our dataset, each application received an average of $1.21$ reviews.
Generally, applicants at the top (who far exceed expectations) and applicants at the bottom (who do not fulfill the program's strict requirements) only need one review. Applicants on the boundary, however, may go through multiple reviews with different committee members. Once all reviews have been made, the graduate chair chooses the final applicants to admit.

By administering an anonymous survey of past admissions committee members, we estimated that interviews are approximately six times longer than reviewing a written application. Therefore, we set our $j$ value (the cost of a strong pull) to be $6$. The gain of an interview is uncertain, so we ran tests over a wide range of $s$ values (the information gain of a strong pull). The number of reviews and interviews ($\times 6$) were summed to get a cost T of the actual review process.

% \vspace{-15pt}
\paragraph{Experimental Setup.}
We simulate an arm pull by returning a real score that a reviewer gave during the admissions process (in the order of the original reviews) or a score from a probabilistic classifier (if all committee members' reviews have been used). An arm pull returns a score drawn from a distribution around the probabilistic result from the classifier to simulate some human error or bias. 

We ran SWAP using the strong pull policy defined in Eq.~\ref{eq:spp}, where we define the utility of each arm by the probabilistic result from the classifier. For our results, we compare SWAP's selections with the real decisions made during the admissions process. 

% \vspace{-15pt}
\paragraph{Results.}

Running SWAP consistently resulted in a higher overall utility than the actual admissions process while using roughly equivalent cost (Table~\ref{table:admissions_linear}). We see that the overall top-K utility $w$ is higher in SWAP than in practice. We also see that SWAP uses roughly equivalent resources $T$ than what is used in practice. This suggests that SWAP is a viable option for admissions. There are, however, some limitations of only using a top-K policy, such as potentially overlooking the value diverse candidates bring to a cohort. For instance, when hiring a software engineering team, if the top candidates are all back-end developers, it may be worthwhile to hire a front-end developer with slightly lower utility.
%HERE

\begin{table}
    \centering
    \begin{tabular}{ l | r  r} 
      & $w$ & $T$ \\
    \midrule
    SWAP & 80.1 (0.5) & 1978 (53)  \\ 
    Actual & 73.96 & \textasciitilde 2000
  \end{tabular}
  \caption{Graduate Admissions Simulation of SWAP. Comparison of top-K utility $w$ and cost $T$ of SWAP with results of the actual admissions process. The values in parentheses are the standard deviations.}
  \label{table:admissions_linear}
  \vspace{-10pt}
\end{table}

\section{Promoting diversity through a submodular function}
Motivated by recent evidence that diversity in the workforce can increase productivity~\cite{Hunt15:Diversity,Desrochers01:Local}, 
%in this section, 
we explore the effect of formally promoting diversity in the cohort selection problem.  
First, we define a submodular function that promotes diversity (Section~\ref{sec:diversity_function}).
Then empirically, we show that SWAP performs well with a submodular objective function (Section~\ref{sec:diverse_gaussian}).  In experiments on real data, we show a significant increase in diversity with little loss in fit while using roughly the same 
%amount of 
resources as in practice (Section~\ref{sec:submod-experiments-real}). 

\subsection{Diversity Function}\label{sec:diversity_function}

Quantifying the diversity of a set of elements is of interest to a variety of fields, including recommender systems, information retrieval, computer vision, and others~\cite{Qin13:Promoting,ashkan2015optimal,Sha16:Framework,radlinski2008learning}.
%Benabbou18:Diversity}
%Ahmed17:Promoting
%Liu14:Robust
For our experiments, we choose a recent formalization from \citet{lin2011class} and apply it to both simulated and real data. 
Their formulation assumes that the arms can be split into $L$ partitions where a partition is denoted as $P_i$ and a cohort is defined as $M=P_1\cup P_2 \cup\ldots\cup P_L$.
At a high level, the diversity function $\wdiverse{}$ is defined as 
$\wdiverse(M) = \sum_{i=1}^L \sqrt[]{\sum_{a \in P_i} u(a)}$.
Lin and Bilmes showed that $\wdiverse{}$ is submodular and monotone.  Under $\wdiverse(M)$ there is typically more benefit to selecting an arm from a class that is not already represented in the cohort, if the empirical utility of an arm is not substantially low. As soon as an arm is selected from a class, other arms from that class experience diminishing gain due to the square root function.  Example~\ref{ex:diversity-vs-fit} illustrates when $\wdiverse{}$ results in a different cohort selection than the top-K function $\wtop{}(M) = \sum_{a \in M} u(a)$.

\begin{example}\label{ex:diversity-vs-fit}
Return to a similar setting to Example~\ref{ex:swap}, with three arms $\{a_1, a_2, a_3\} = A$ and true utilities $u(a_1) = 0.6$, $u(a_2) = 0.5$, and $u(a_3) = 0.3$.  Assume there exist $L=2$ classes, and let arms $a_1$ and $a_2$ belong to class $1$, and arm $a_3$ belong to class $2$.  Then, for a cohort of size $K=2$, $\wtop{}$ will select cohort $\Mtop{} = \{a_1,a_2\}$, while $\wdiverse{}$ will select cohort $\Mdiverse{} = \{a_1,a_3\}$.  Indeed, $\wtop{}(\Mtop{}) = 1.1 > 0.9 = \wtop{}(\Mdiverse{})$, while $\wdiverse{}(\Mtop{}) = \sqrt{1.1} \approx 1.05 < 1.3 \approx \sqrt{0.6}+\sqrt{0.3} = \wdiverse{}(\Mdiverse{})$.
\end{example}

Maximizing a general submodular function is computationally difficult.  \citet{nemhauser1978analysis} proved that a close to optimal---that is, $\wdiverse{}(M^*) \geq \left(1-\frac{1}{e}\right) \texttt{OPT}$---greedy algorithm exists for submodular, monotone functions that are subject to a cardinality constraint.  We use that standard greedy packing algorithm in our implementation of the oracle.

\subsection{Diverse Gaussian Arm Experiments}\label{sec:diverse_gaussian}
To determine if SWAP works in this submodular setting, we ran simulations over a variety of hardness levels. We instantiated the problem similarly to that of Section~\ref{sec:experiments-sim} with the added complexity of dividing the arms into three partitions. 

Figure~\ref{fig:submod_hardness} shows the cost of running SWAP compared to the theoretical bounds of the linear model over increasing hardness levels. The results show that SWAP performs well for the majority of cases. However, for some cases, the cost becomes very large. To deal with those situations, we can use a probably approximately correct (PAC) relaxation of Algorithm~\ref{alg:swap} where Line~\ref{line:equality} becomes $\texttt{If } \left|w(\tilde{M}_t) - w(M_t)\right| \leq \epsilon$. The results from this PAC relaxation where $\epsilon=0.01$ can be found in Figure~\ref{fig:pac_hardness}. Note that the definition of hardness found in Equation~\ref{eq:hardness} does not quite fit this situation since the graphs in Figure~\ref{fig:submod_hardness_figs} have higher costs for some lower hardness problems while having lower cost for some higher hardness problems. Given that the PAC relaxation performs well with low costs over all of the tested hardness problems, we propose that SWAP can be used with $\wdiverse$ and perhaps other submodular and monotone functions.

\begin{figure}
\centering
\begin{subfigure}[b]{0.49\columnwidth}
\includegraphics[width=\textwidth]{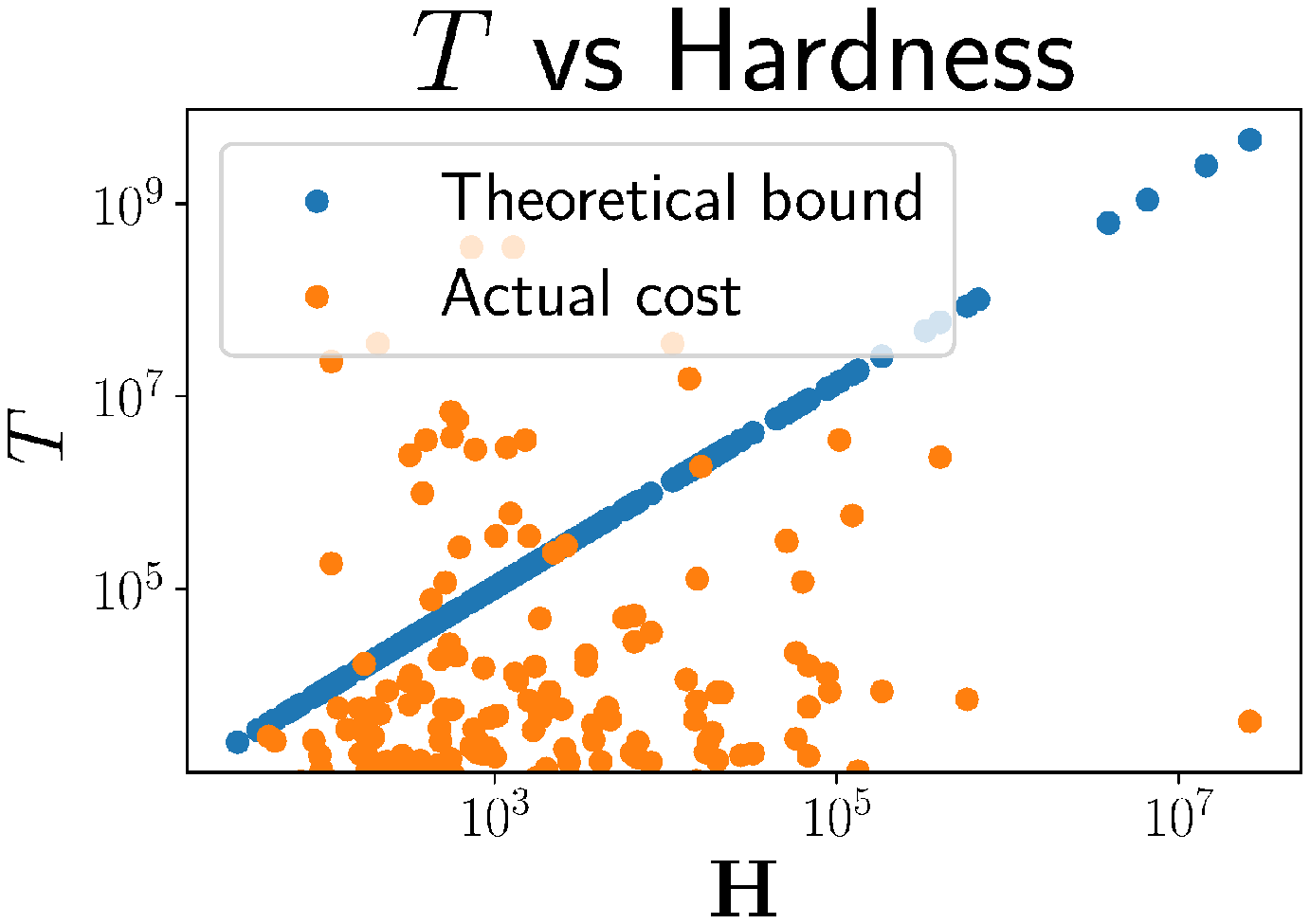}
%     \vspace{-20pt}
    \caption{SWAP with $\wdiverse$}
    \label{fig:submod_hardness}
\end{subfigure}
\begin{subfigure}[b]{0.49\columnwidth}
\includegraphics[width=\textwidth]{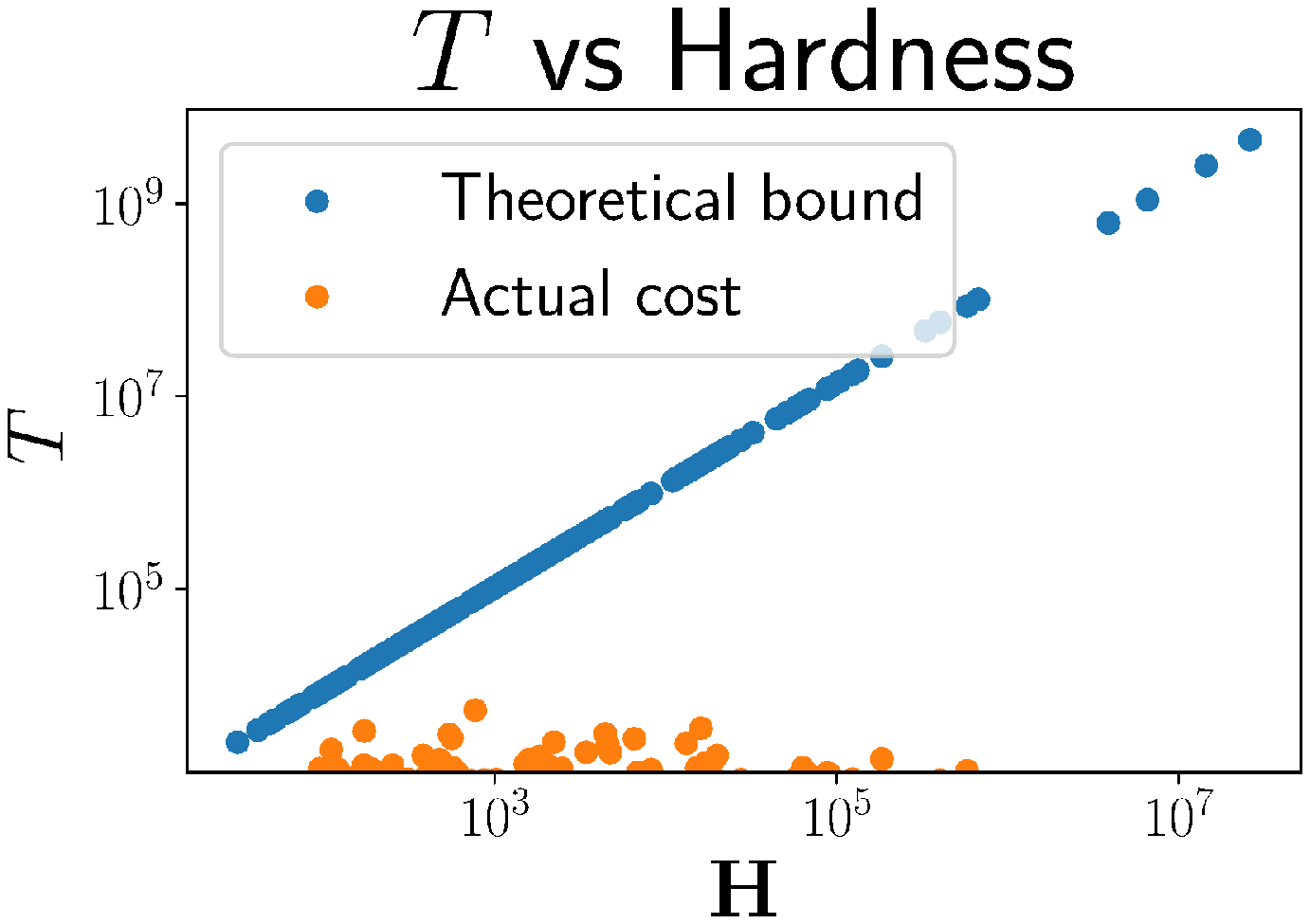}
%     \vspace{-20pt}
    \caption{PAC relaxation with $\wdiverse$}
    \label{fig:pac_hardness}
\end{subfigure}
\caption{Exploration of bounds in practice for SWAP with $\wdiverse$ (\ref{fig:submod_hardness}) and the PAC relaxation of SWAP with $\wdiverse$ (\ref{fig:pac_hardness}) vs. the theoretical bounds of Theorem~\ref{thm:swap} with respect to hardness (Note that both axes are a log scale).}
\label{fig:submod_hardness_figs}
\vspace{-10pt}
\end{figure}

% \begin{figure}
% \centering
% \includegraphics[width=\columnwidth]{numerical_analysis_submod.pdf}
% \caption{Exploration of bounds in practice for SWAP with $\wdiverse$ vs. the theoretical bounds of Theorem~\ref{thm:swap} with respect to hardness (Note that both axis are a log scale). \cs{This is not the actual graph}}
% \label{fig:submod_hardness}
% \end{figure}
% \cs{Also adding a PAC relaxation}

% \begin{figure}
% \centering
% \includegraphics[width=\columnwidth]{num_analysis_submod_pac01.pdf} 
% \caption{Exploration of bounds in practice for the PAC relaxation of SWAP with $\wdiverse$ vs. the theoretical bounds of Theorem~\ref{thm:swap} with respect to hardness (Note that both axis are a log scale).}
% \label{fig:pac_hardness}
% \end{figure}

\subsection{Diverse Graduate Admissions Experiment}\label{sec:submod-experiments-real}
Using the same setting as described in Section~\ref{sec:linear_experiments_real}, we simulate a SWAP admissions process with the submodular function $\wdiverse{}$. We partition groups by gender (which is binary in our dataset) and multi-class region of origin. We found that we did not have to resort to the PAC version of SWAP to tractably run the simulation  over various partitions of the graduate admissions data.

% Not relevant
%In the most recent academic year 24\% of the applicants were female and 26\% of the applicants admitted were female.

\begin{figure}[ht]
\centering
  \begin{subfigure}[b]{0.43\columnwidth}
    \includegraphics[width=\textwidth]{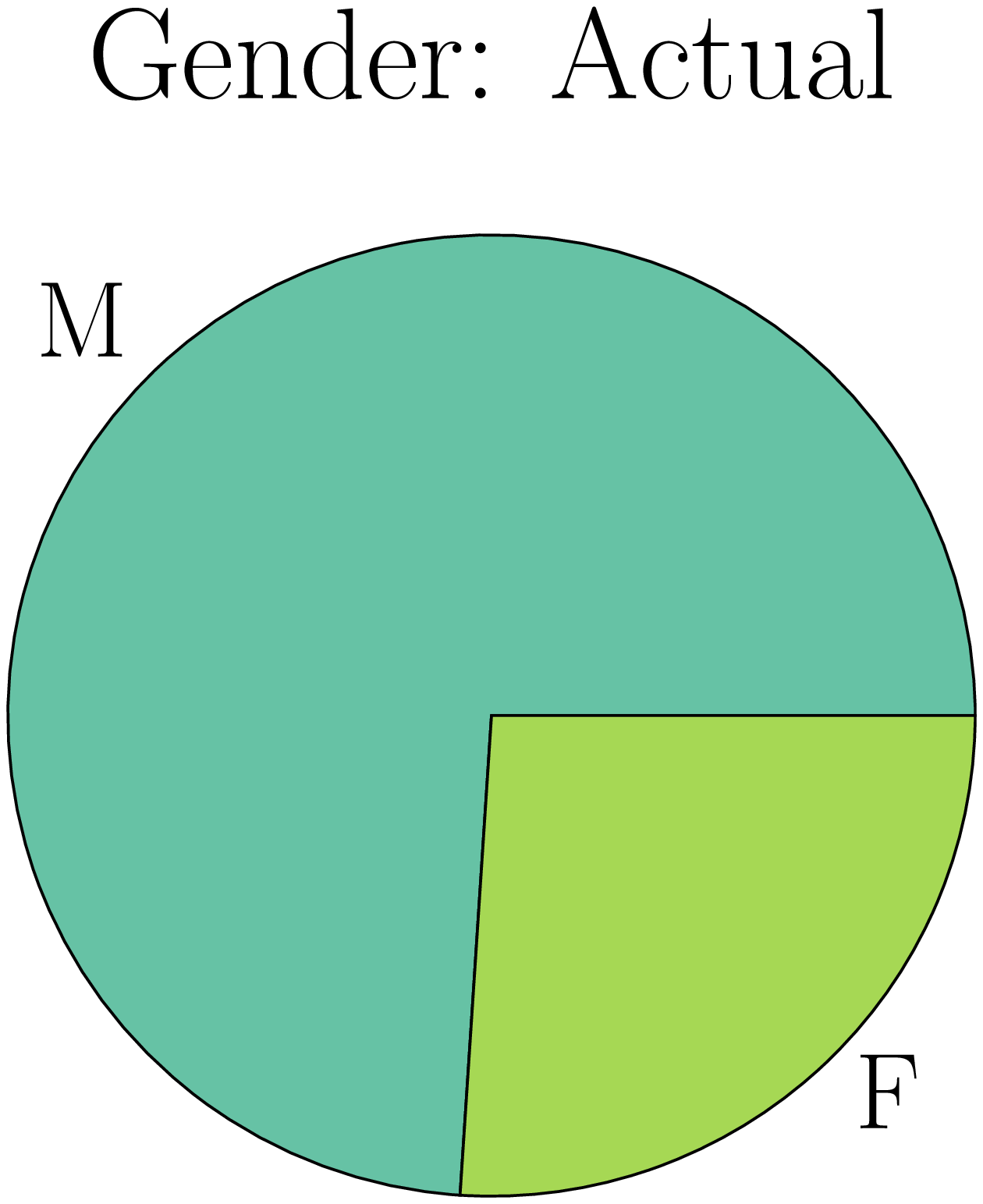}
    \caption{Actual}
    \label{fig:actual_sex}
  \end{subfigure}
  \begin{subfigure}[b]{0.43\columnwidth}
    \includegraphics[width=\textwidth, height=20cm,keepaspectratio]{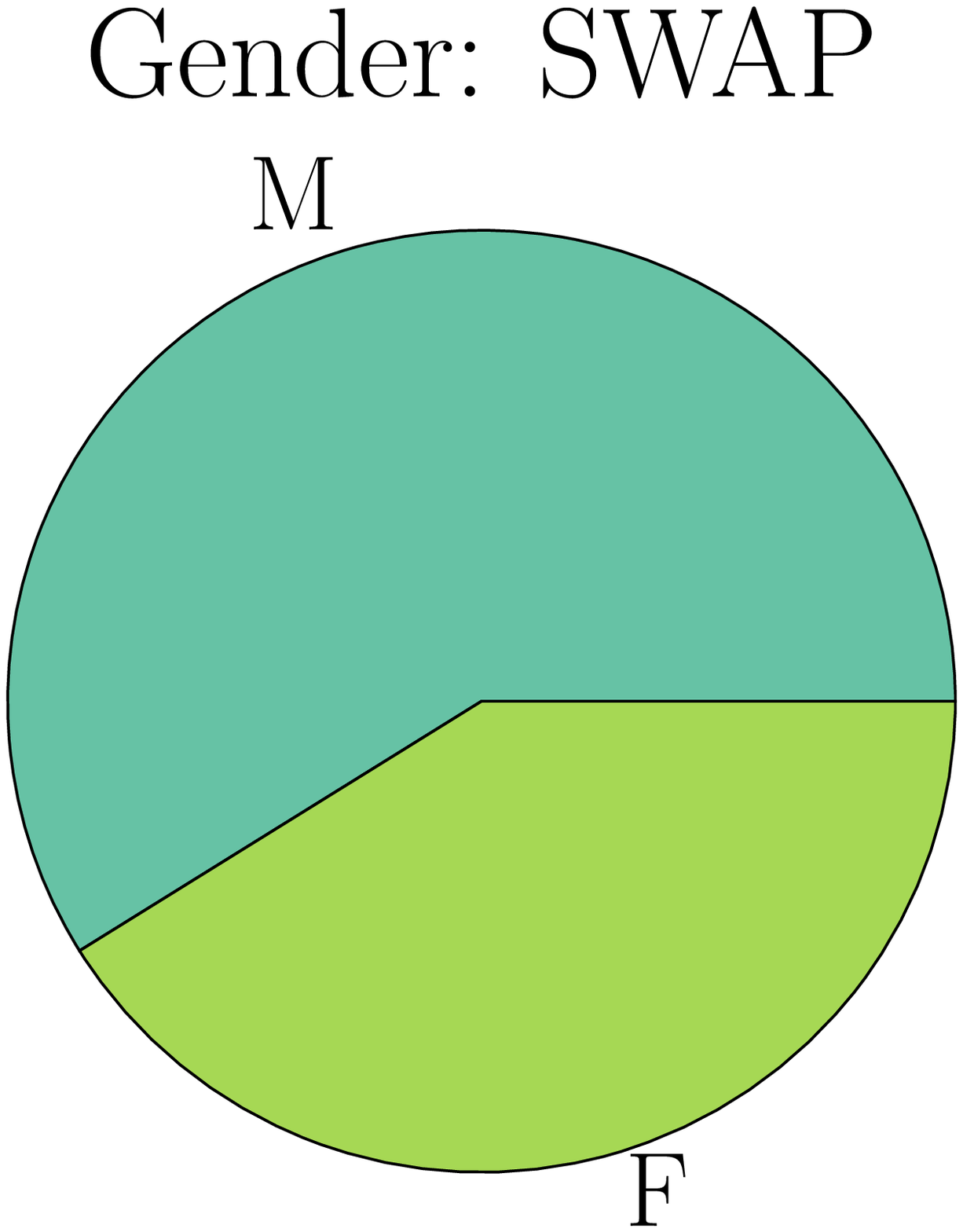}
    \caption{SWAP}
    \label{fig:swap_sex}
  \end{subfigure}
  \begin{subfigure}[b]{0.43\columnwidth}
    \includegraphics[width=\textwidth]{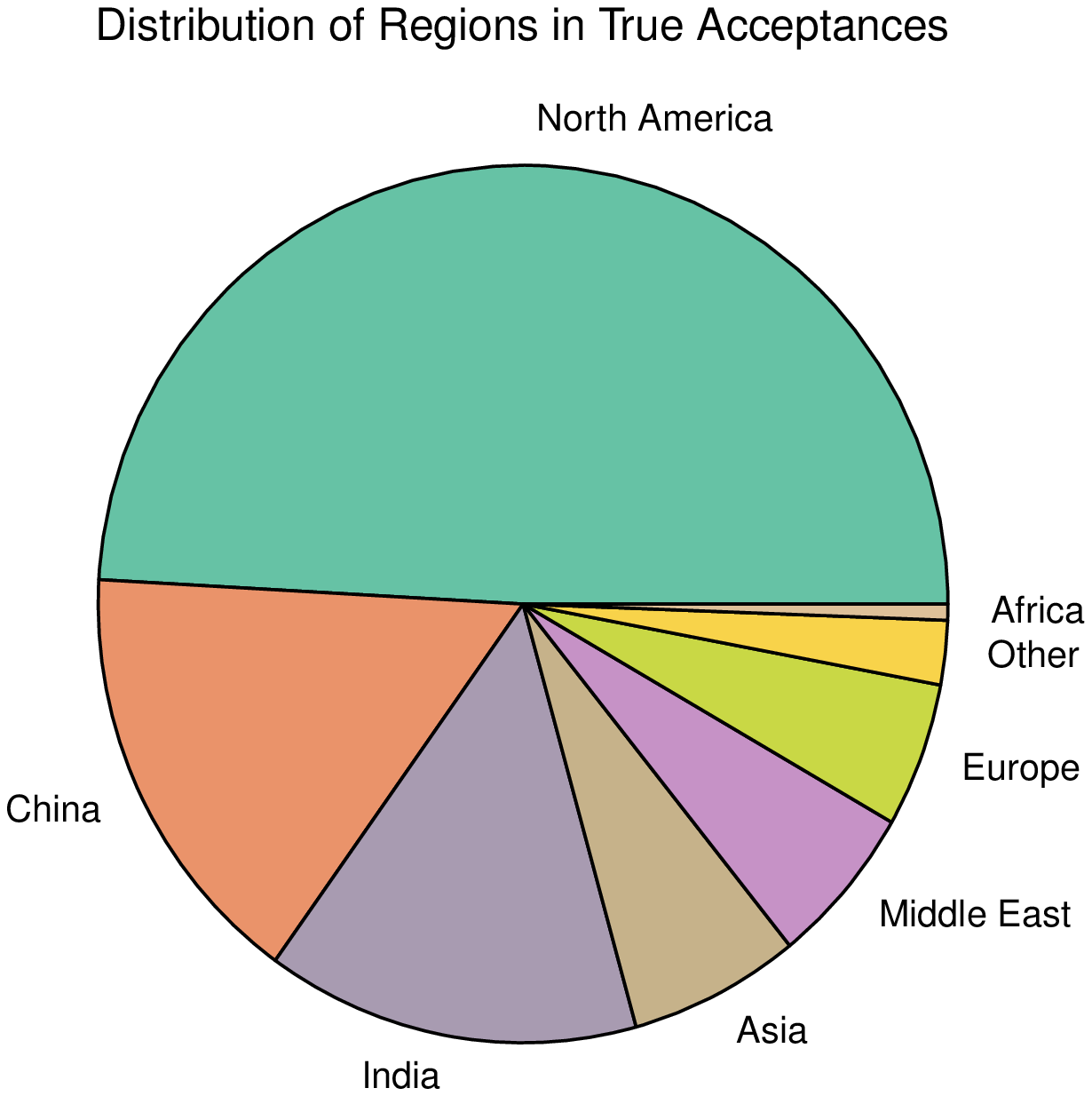}
    \caption{Actual}
    \label{fig:actual_region}
  \end{subfigure}
  \begin{subfigure}[b]{0.43\columnwidth}
    \includegraphics[width=\textwidth]{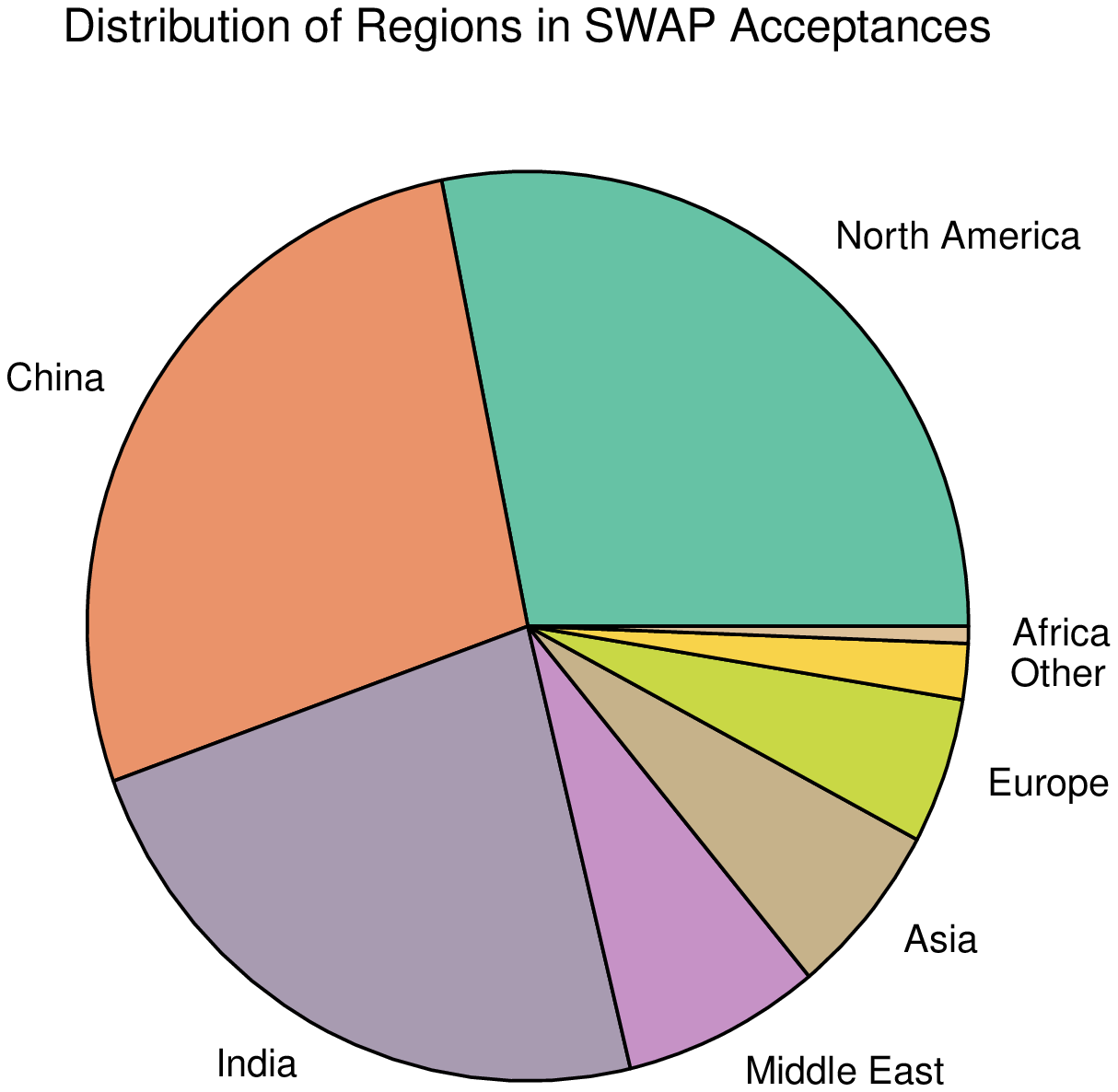}
    \caption{SWAP}
    \label{fig:swap_region}
  \end{subfigure}

  \caption{Comparison of true and SWAP-simulated admissions: gender (\ref{fig:actual_sex}, \ref{fig:swap_sex}) \& region (\ref{fig:actual_region}), \ref{fig:swap_region}).}
  \label{fig:sex}
  \vspace{-10pt}
\end{figure}

% We used a random forest classifier \cite{scikit-learn} trained with $3$-fold cross validation on multiple features pulled from data from the prior two academic years.
% \cs{Too much information (probably can remove this):}
% Its features consisted of standard application fields (GPA, GRE scores, area of interest, sex, region, etc.) and features generated from the recommendation letter texts. Features pulled from the recommendation letters were overall word count and word counts from the word groups discussed by~\citet{schmader2007} 
%\snc{we could stop here and save 2 lines} created from Chemistry and Biochemistry recommendation letters.  We found that these word groups are general and translated well to Computer Science students.

% \vspace{-30pt}
\paragraph{Results.}
\begin{table}
%   \begin{subtable}[t]{\columnwidth}
    \centering

  %\vspace{-10pt}
    \begin{tabular}{ l | r  r | r  r } 
       & \multicolumn{2}{c|}{Gender} & \multicolumn{2}{c}{Region of Origin}  \\[0.5ex] 
      & $\sqrt{\wtop{}}$ & $\wdiverse{}$ & $\sqrt{\wtop{}}$ & $\wdiverse{}$ \\ 
    \midrule
    SWAP & 8.5 (0.03)& 12.1 (0.06) & 8.0 (0.03)& 22.1 (0.03)  \\ 
    Actual & 8.6 & 11.8 & 8.6 & 20.47
  \end{tabular}
  \caption{SWAP's average gain in diversity over different classes.}
  \label{table:diversity}
%   \vspace{-20pt}
\end{table}
We compare two objective functions, $\wtop{}$ and $\wdiverse{}$. $\wtop{}$ treats all applicants as members of one global class. This mimics a top-K objective, where applicants are valued based on individual merit alone. 
%, defined as $\sqrt[]{\sum_{a\in M}u(a)}$.
$\wdiverse{}$ promotes diversity using reported gender  and region of origin for class memberships. We use those classes as our objective during separate runs of SWAP. 

\begin{figure}
% \vspace{-15 pt}
\centering
  \begin{subfigure}[b]{0.55\columnwidth}
    \includegraphics[width=\textwidth]{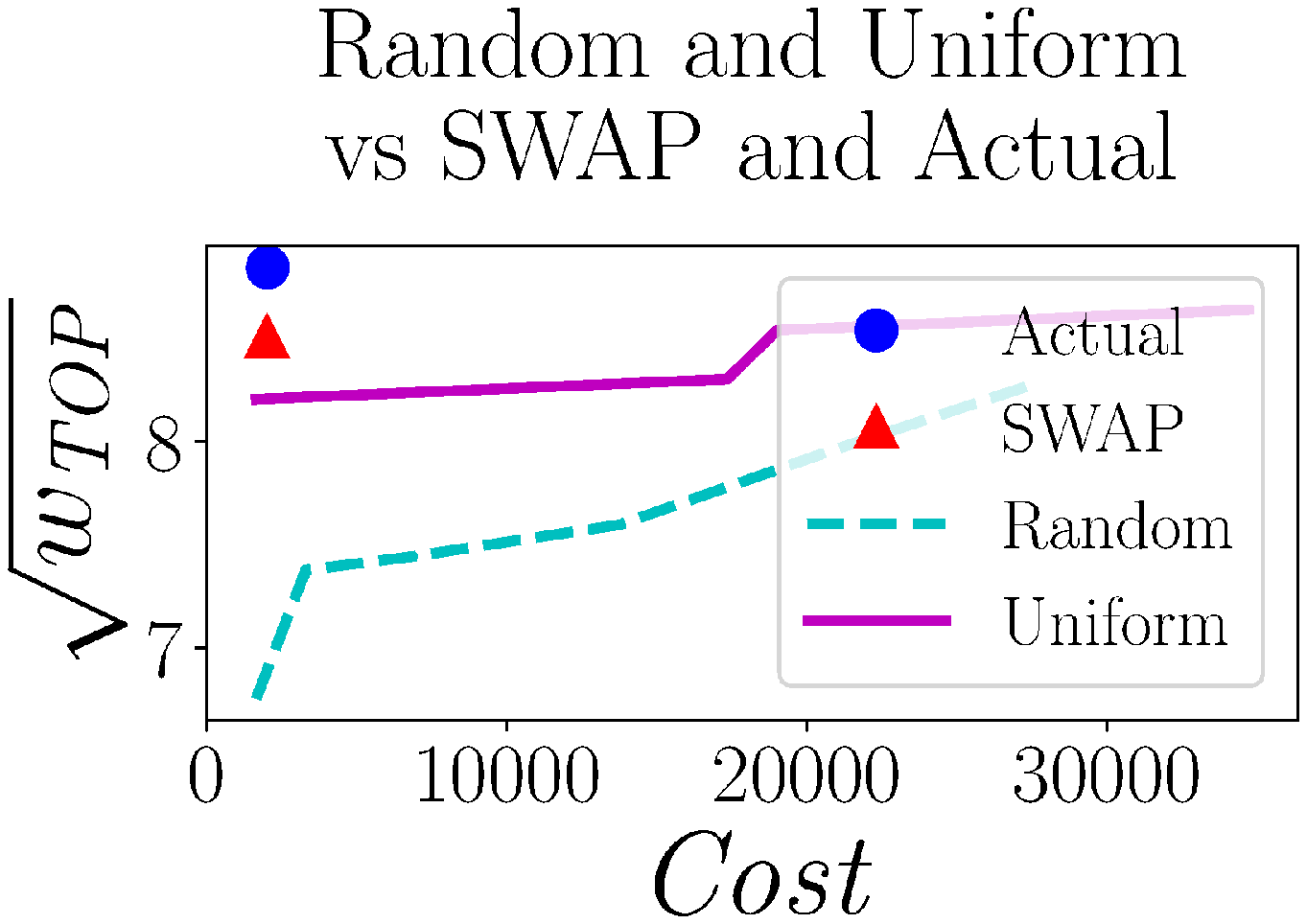}
    \caption{$\sqrt[]{\wtop{}}$}
    \label{fig:rand_uni_top}
  \end{subfigure}
  \begin{subfigure}[b]{0.55\columnwidth}
    \includegraphics[width=\textwidth]{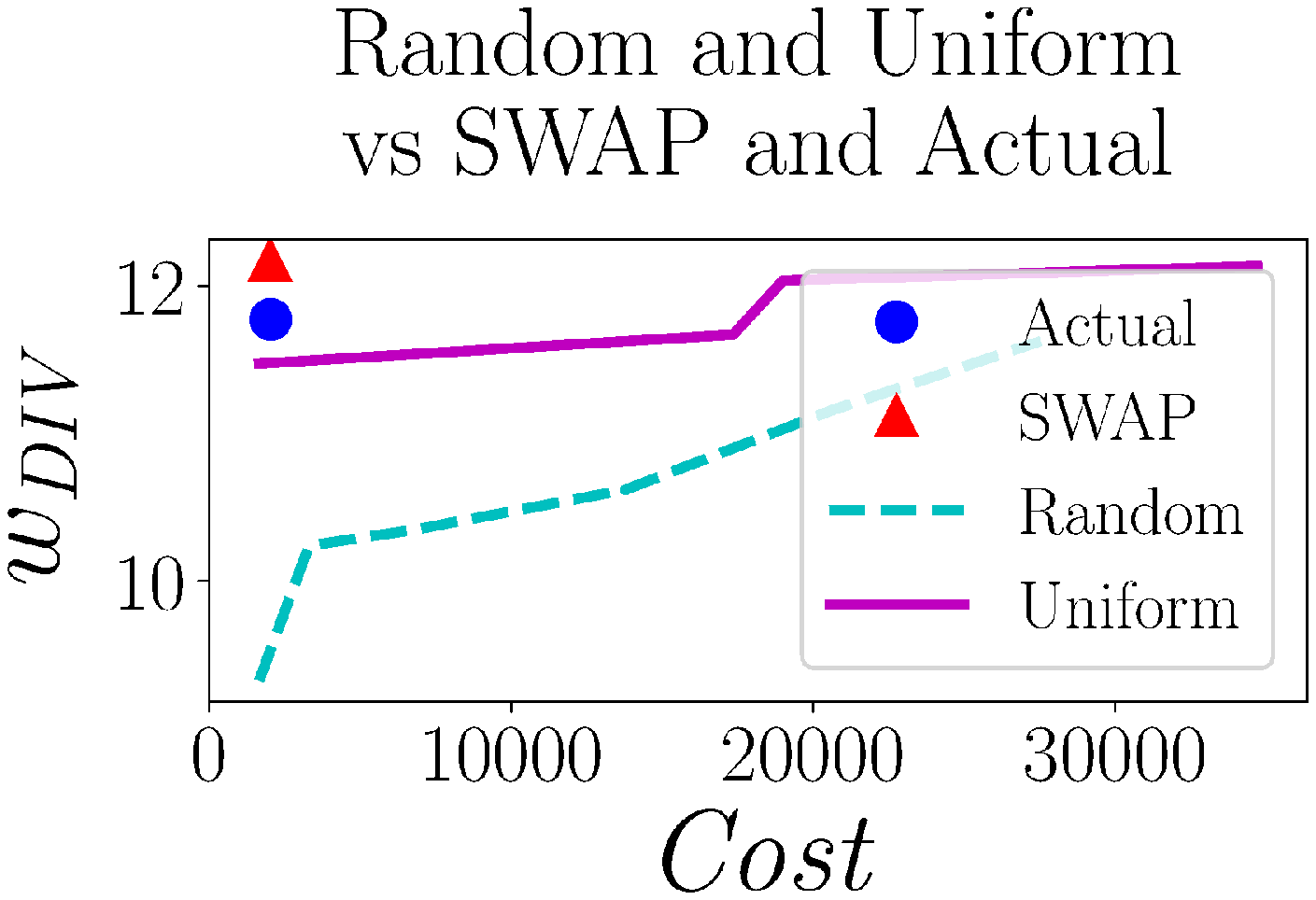}
    \caption{$\wdiverse{}$}
    \label{fig:rand_uni_div}
  \end{subfigure}
  \caption{Cost vs utility function comparisons of Actual, SWAP, Random, and Uniform.}
  \label{fig:rand_uni}
%   \vspace{-10pt}
\end{figure}

Table~\ref{table:diversity} and Figure~\ref{fig:sex} show experimental results on the test set (most recent year) of real admissions data. We report $\sqrt{\wtop{}}$ instead of $\wtop{}$ to align units across objective functions. Because the square root function is monotonic, this conversion does not impact the maximum utility cohort. Since SWAP uses a diversity oracle (\S\ref{sec:diversity_function}), we notice a slight drop in top-K utility. However, there is a large gain in diversity.

% \begin{figure}[!tbp]
% \centering
%   \begin{subfigure}[b]{0.32\columnwidth}
%     \includegraphics[width=\textwidth]{actual_region.pdf}
%     \caption{Actual}
%     \label{fig:actual_region}
%   \end{subfigure}
%   \begin{subfigure}[b]{0.32\columnwidth}
%     \includegraphics[width=\textwidth]{swap_region.pdf}
%     \caption{SWAP}
%     \label{fig:swap_region}
%   \end{subfigure}
%   \caption{Spread of acceptances over region of origin for SWAP (\ref{fig:swap_region}) and acceptances in practice (\ref{fig:actual_region}).}
%   \label{fig:region}
% \end{figure}

% \begin{figure}[h]
% \centering
% \begin{subfigure}{0.5\textwidth}
% \centering
%     \includegraphics[width=0.4\columnwidth]{actual_region.pdf}
%     \caption{Actual}
%     \label{fig:actual}
% \end{subfigure}

% \begin{subfigure}{0.5\textwidth}
% \centering
%     \includegraphics[width=0.4\columnwidth]{swap_region.pdf}
%     \caption{SWAP}
%     \label{fig:swap}
% \end{subfigure}
% \caption{Spread of acceptances over region of origin for SWAP (\ref{fig:swap}) and acceptances in practice (\ref{fig:actual}).}
% \label{fig:region}
% \end{figure}

SWAP, on average, used 1.17 pulls per arm, of which 5\% were strong. During the last admissions decision process each applicant was reviewed on average 1.21 times. Interviews were not consistently documented. SWAP performed more strong pulls (interviews) of applicants than our estimation of interviews by the graduate admissions committee, but did fewer weak pulls. SWAP spent roughly the same amount of total resources as the committee did with strong pull cost $j=6$ and weak pull cost of $1$.  Given the gains in diversity, this supports SWAP's potential use in practice.

We also compare SWAP to both uniform and random pulling strategies, shown in Table~\ref{fig:rand_uni}. The uniform strategy weak pulls each arm once and strong pulls each arm once. This had a cost approximately 9 times that of SWAP and resulted in a general utility of 8.3 and a diversity value of 11.8. The random strategy weak or strong pulls arms randomly. Even when spending 10 times the cost of running SWAP, the random strategy has only a general utility of 7.9 and a diversity value of 11.16. SWAP significantly outperforms both of these strategies.

%To the best of our knowledge no other Combinatorial Multi-Armed Bandit with varied arm pull rewards and costs exists to compare.

% \cs{So we (as in UMD) did 1994 reviews over 1652 students (this is a low estimate); SWAP did 1925.2 reviews over 1652 students; On average 102.9 of those reviews were strong pulls}

\section{Discussion}
Admissions and hiring are extremely important processes that affect individuals in very real ways. Lack of structure and systematic bias in these processes, present in application materials or in resource allocation, can negatively affect applicants from traditionally underrepresented minority groups. We suggest a formally structured process to help prevent disadvantaged people from falling through the cracks. We discuss benefits (Section~\ref{sec:benefits}) and limitations (Section~\ref{sec:limitations}) to this approach, as well as mechanism design suggestions for deploying SWAP in practice (Section~\ref{sec:mech-design}).

\subsection{Benefits}\label{sec:benefits}
We established SWAP, a clear-cut way to model a sequential decision-making process where the aim is to select a subset using two kinds of information-gathering strategies as a multi-armed bandit algorithm. This process could have a number of benefits when used in practical hiring/admissions settings.

Over the course of designing and running our experiments, we noticed what seemed like bias in the application materials of candidates belonging to underrepresented minority groups. Our initial observations were similar to those  of scholars such as \citet{schmader2007}, who found that recommendation letters for female applicants to faculty jobs contained fewer work-specific terms than male applicants. After revisiting and coding application materials in our experiments, we found similar results for female and other minority candidates. 

Our process hopes to mitigate this bias by providing a completely structured process, informed by the many studies showing that structured interviewing reduces bias (see Section \ref{relatedwork}). %~\cite{Mayfield64:Selection,Schmitt76:Social,Arvey82:Employment,Harris89:Reconsidering,Williamson97:Employment,Posthuma02:Beyond}. 
As we showed in our experiments, one can take additional steps to encourage diversity (by using $\wdiverse{}$) to select a more diverse team, which can result in a less biased, more productive work environment~\cite{Hunt15:Diversity}.
%%Ulrich65:Selection

Furthermore, by including a diversity measure in the objective function, candidates from disadvantaged groups are given a higher chance of being pulled through the cracks since we prioritize recommending diverse candidates for additional resource allocation. 

A practical benefit to SWAP is that it avoids spending unnecessary resources on outlier candidates and quickly finds uncertain candidates. This give us more information about the applicant pool as whole, allowing us to make better decisions when choosing a cohort while using roughly equivalent resources.

Finally, in our simulations of running SWAP during the graduate admissions process, we also select a more diverse student cohort at low cost to cohort utility.

\subsection{Limitations}\label{sec:limitations}
One significant limitation of a large-scale system like SWAP is that it relies on having a utility score for each applicant. 
In our graduate admissions experiment, we assume the true utility of an applicant can be modeled by our classifier, which is not entirely accurate. In reality, the true utility of an applicant is nontrivial to estimate as it is subjective and depends on a wide range of factors. Finding an applicant's true utility would require following and evaluating the applicant through the end of the program, perhaps even after they have left the university. Even if that were possible, being able to quantify true utility is nontrivial due to the subjectivity of success and its qualitative properties.
This problem is not limited to SWAP--it is present in any admissions, hiring, peer review, and other processes that attempt to quantify the value of qualitative properties. Therefore in these settings there is no choice but to rely on proxy values for the true utility, such as reviewer scores.

Similarly, even though the cost of a resource, $j$, may be inherently quantifiable, the information gain $s$, is harder to define in such a process. For example, how much more information one gains from an interview over a resume review is subjective and, by nature, more qualitative than quantitative. Also, the information gain from expending the same resource may vary over applicants, though this is slightly mitigated by using structured interviews.

Another limiting factor is that not every admitted applicant will matriculate into the program. We assume that all applicants will accept our offer, but in reality, that is not the case. Therefore, we potentially reject applicants that would matriculate, as opposed to accepting higher quality applicants that will ultimately not. 

Finally, our graduate admissions experiment \emph{simulated} strong arm pulls: reviewers did not give  additional interviews of applicants during the experiment.  Although our results are promising, SWAP should be run in conjunction with an actual admissions process to assess its true performance.

\subsection{Design Choices}\label{sec:mech-design}
Our motivation in designing SWAP and exploring related extensions is to aid hiring and admissions processes that use structured interviewing practices and aim to hire a diverse cohort of workers. As with any algorithm deployed in practice, actually \emph{running} SWAP alongside a hiring process requires adaptation to the specific environment in which it will be used (e.g., batch versus sequential review), as well as estimation of parameters involving correctness guarantees (e.g., $\delta$ and $\epsilon$) or population estimates (e.g., $\sigma$).

In general, we recommend that the policymaker or mechanism designer tasked with setting parameters for SWAP, or a SWAP-style algorithm, should conduct a study on past admissions/hiring decisions. This study should include quantitative information (e.g., how many people applied, how many were accepted, how many were interviewed, how long did interviews take) and qualitative information (e.g., how confident was reviewer A after reviewing an applicant B). From this a mechanism designer could determine estimates of population parameters like $\sigma$, information gain parameters $s$, and interview cost parameter $j$. 

To estimate $\sigma$, a policymaker could perform a study on past reviews and interviews to determine the range of scores for arms. %, providing an estimate of $\sigma$.  
However, this method could incorporate various biases that may already exist in prior review and scoring processes.  That consideration should be taken into account, but exactly how is situation-specific. The introduction of and strict adherence to the structured interview paradigm is a general method to alleviate some of these concerns.

To estimate the value of $s$, the information gain of a strong pull, one could quantify the difference in confidence level for a particular applicant after performing weak and strong pulls; e.g., how confident was reviewer $A$ after reviewing an applicant $B$, how much more confident was $A$ after interviewing $B$, and so on. For $j$, policy makers could use the average relative difference in time (and possibly monetary) resources spent on different information gathering strategies.

The choice of $\delta$ and $\epsilon$ could be determined via a sensitivity-analysis-style study, where simulations are run using various settings of $\delta$ and $\epsilon$. Policymakers can then judge the simulated risks and rewards to define the parameters.

Once the hyper-parameters have been found, simulations can be performed to find the optimal zone (as discussed in Section~\ref{sec:experiments-sim}). This will allow the designer to determine the best strong pull policy.

Ideally, both studies should include a run focused on past decisions and one run every time the selection process occurs, to ensure SWAP's parameters align with the experiences and values of human decision-makers.

\section{Conclusion}\label{sec:conclusions}
In this paper, we modeled the allocation of interviewing resources and subsequent selection of a cohort of applicants as a combinatorial pure exploration (CPE) problem in the multi-armed bandit setting.  We generalized  a recent CPE algorithm to the setting where arm pulls can have different costs--where a decision maker can perform \emph{strong} and \emph{weak} pulls, with the former costing more than the latter, but also resulting in a less noisy signal.  We presented the strong-weak arm-pulls (SWAP) algorithm and proved theoretical upper bounds for a general class of arm pulling strategies in that setting.  We also provided simulation results to test the tightness of these bounds.  We then applied SWAP to a real-world problem with combinatorial structure: incorporating diversity into university admissions. On real admissions data from one of the largest US-based computer science graduate programs, we showed that SWAP produces more diverse student cohorts at low cost to student quality while spending a budget comparable to that of the current admissions process.

It would be of both practical and theoretical interest to tighten the upper bounds on convergence for SWAP, either for a reduced or general set of arm pulling strategies. We would also like to extend SWAP to include more than two types of pulls or information gathering strategies. We aim to incorporate a more realistic version of diversity and achieve a provably \emph{fair} multi-armed bandit algorithm, as formulated by~\citet{FairnessNIPS2016} and~\citet{Liu17:Calibrated}. Additionally, we aim to create a version of SWAP that incorporates applicant matriculation into the candidate-recommending and selection process. 
%Furthermore, we hope to determine whether or not there was bias in past admissions decisions and, if any, adapt the formulation of our diversity function to address it.
%Future work will include a live version of SWAP that will be used during the 2017-18 admissions period, with the goal of allocating resources more efficiently with an ever-increasing number of applicants.  

An interesting direction that may be worth pursuing is drawing connections between our work---the selection of a diverse subset of arms---to recent work in \emph{multi-winner voting}~\cite{Faliszewski17:Multiwinner}, a setting in social choice where a subset of alternatives are selected instead of a single winner.  Recent work in that space looks at selecting a ``diverse but good'' committee of alternatives via social choice methods~\cite{Bredereck18:Multiwinner,Aziz18:Rule}.  Similarly, drawing connections to diversity in allocation and matching problems~\cite{Ahmed17:Diverse,Lian18:Conference,Benabbou18:Diversity} is also potentially of interest.

\section{Acknowledgements}
 Schumann and Dickerson were supported by NSF IIS RI CAREER Award \#1846237; Counts was supported by NSF REU-CAAR (Combinatorics and Algorithms for Real Problems) CNS \#1560193 hosted at the University of Maryland.  We thank Google for gift support, and the anonymous reviewers for helpful comments.

% \clearpage
\bibliographystyle{ACM-Reference-Format}  % do not change this line!
\balance  % do not change this line -- unless you manually balance your last page
\bibliography{refs,cohort}

\clearpage
\appendix

\setcounter{theorem}{1}

\section{Table of Symbols}\label{app:variables}
For ease of exposition and quick reference, Table~\ref{tbl:variables} lists each symbol used in the main paper, along with a brief description of that symbol.  (We note that each symbol is also defined in the body of the paper prior to its first use.)

\begin{table}[h!]
\centering
\begin{tabular}{ l | p{6cm}  }
  Variable & Summary \\ \hline
  $n$ & Number of applications \\ \hline
  $K$ & Size of cohort wanted \\ \hline
  $A$ & Set of applications \\ \hline
  $a_i$ & a single application with $i\in[n]$ \\ \hline
  $u(a_i)$ & True utility of arm $a_i$ where $u(a_i)\in[0,1]$ \\ \hline
  $\mathbf{u}$ & The set of true utilities. \\ \hline
  $\hat{u}(a_i)$ & Empirical estimate of utility of arm $a_i$ \\ \hline
  $\rad(a_i)$ & Uncertainty bound around arm $a_i$. The true utility $u(a_i)$ should lie with $\hat{u}(a_i)-\rad(a_i)$ and $\hat{u}(a_i)+\rad(a_i)$ \\ \hline
  $\mathcal{M}$ & Decision class. Set of potential cohorts (subsets of arms). \\ \hline
  $w$ & Submodular and monotone function for total utility of a cohort. $w:\mathcal{M}\times\mathbb{R}^n\rightarrow\mathbb{R}$ \\ \hline
  $\doracle(\cdot)$ & Maximization oracle  \\ \hline
  $M^*$ & The optimal cohort given the true utilities $\mathbf{u}$ and total utility function $w$ \\ \hline
  $\Delta_a$ & Gap score for an arm $a$ defined in Equation \ref{eq:gap} \\ \hline
  $\mathbf{H}$ & Hardness of a problem defined in Equation \ref{eq:hardness} \\ \hline
  $\width(\mathcal{M})$ & The smallest distance between any two sets in $\mathcal{M}$ \\ \hline
  $j$ & Cost of a strong arm pull \\ \hline
  $s$ & Information gain of a strong arm pull (ie. the reward is counted $s$ times and is pulled from a tighter distribution around the true utility of an arm) \\ \hline
  $\Cost_t$ & Total cost of pulling arms up until time $t$ \\ \hline
  $T_t(a)$ & Total information gain for arm $a$ up until time $t$ \\ \hline
  $M_t$ & Best cohort of arms at time $t$, given the empirical utilities \\ \hline
  $\tilde{u}_t(a)$ & Worst case empirical utility of arm $a$ (See lines 9-10 of Algorithm \ref{alg:swap}) \\ \hline
  $\tilde{M}_t$ & Best cohort of arms at time $t$, given worst case empirical utilities \\ \hline
  $\spp(s,j)$ & Strong pull policy probability function. See Equation \ref{eq:spp} for an example \\ \hline
  $\sigma$ & We assume that each arm has a $\sigma$-sub-Gaussian tail \\ \hline
  $\bar{X}_{\Cost}$ & Expected cost (expected $j$ value) \\ \hline
  $\bar{X}_{\Gain}$ & Expected information gain (expected $s$ value) \\ \hline
  $\delta$ & Probability that the algorithms output the best sets (See Theorem \ref{thm:strong_only} and Theorem \ref{thm:swap}) \\ \hline
  $\wdiverse$ & Diversity function \\ \hline
  $\wtop$ & Top-K function. $\sqrt{\wtop}$ is the square-root of the top-K function. \\ \hline
\end{tabular}
\caption{All symbols used in the main paper.}\label{tbl:variables}
\end{table}

\section{CLUCB Algorithm}\label{app:clucb}

The Combinatorial Lower-Upper Confidence Bound (CLUCB) algorithm by \citet{chen2014combinatorial} is shown in Algorithm \ref{alg:clucb}. At the beginning of the algorithm, pull each arm once and initialize the empirical means with the rewards from that first arm pull. During iteration $t$ of the algorithm, first find the set $M_t$ using the Oracle. Then, compute the confidence radius for each arm. Find the worst case for each arm and compute a new set $\tilde{M}_t$ using the worst case estimates of the arms. If the utility of the initial set $M_t$ and the worst case set $\tilde{M}_t$ are equal then output set $M_t$. Pull the most uncertain arm (the arm with the widest radius) from the symmetric difference of the two sets $M_t$ and $\tilde{M}_t$. Update the empirical means.

\begin{algorithm}
\caption{Combinatorial Lower-Upper Confidence Bound (CLUCB)}\label{alg:clucb}
\begin{algorithmic}[1]
\Require Confidence $\delta\ \in (0,1)$; Maximization oracle: $\doracle(\cdot):\mathbb{R}^n \rightarrow \mathcal{M}$
\State Weak pull each arm $a \in [n]$ once.
\State Initialize empirical means $\bar{\mathbf{u}}_n$
\State $\forall a \in [n]$ set $T_n(a)\gets 1$
\For{$t=n,n+1,\ldots$}
	\State $M_t \gets \doracle(\bar{\mathbf{u}}_t)$
	\State $\forall a \in [n]$ compute confidence radius $\rad_t(a)$
	\For{$a = 1,\ldots,n$}
    	\If {$a \in M_t$} $\tilde{u_t}(a) \gets \bar{u}_t(a) - \rad_t(a)$
        \Else {} $\tilde{u_t}(a) \gets \bar{u}_t(a) + \rad_t(a)$
        \EndIf
	\EndFor
    \State $\tilde{M}_t \gets \doracle(\tilde{\mathbf{u}}_t)$
    \If {$\tilde{w}(\tilde{M}_t) = \tilde{w}(M_t)$}
        	\State $\texttt{Out} \gets M_t$
            \State \Return $\texttt{Out}$
        \EndIf
        \State $p_t \gets \arg\max_{a \in (\tilde{M}_t \setminus M_t) \cup (M_t \setminus \tilde{M}_t)}\rad_t(a)$
        \State Pull arm $p_t$
        \State Update empirical means $\bar{\mathbf{u}}_{t+1}$ using the observed reward
        \State $T_{t+1}(p_t) \gets T_t(p_t)+1$
        \State $T_{t+1} \gets T_t(a)\ \forall a \neq p_t$
\EndFor
\end{algorithmic}
\end{algorithm}

\section{Proofs}\label{sec:proofs}

\begin{theorem}[Chen et al. 2014] \label{thm:weak_only} Given any $\delta \in (0,1)$, any decision class $\mathcal{M}\subseteq 2^{[n]}$, and any expected rewards $\mathbf{u} \in \mathbb{R}^n$, assume that the reward distribution $\varphi_a$ for each arm $a\in[n]$ has mean $u(a)$ with an $\sigma$-sub-Gaussian tail. Let $M_*=\arg\max_{M\in\mathcal{M}}w(M)$ denote the optimal set. Set $\rad_t(a)=\sigma\sqrt[]{2\log\left(\frac{4nt^3}{\delta}/T_t(a)\right)}$ for all $t>0$ and $a\in[n]$. Then, with probability at least $1-\delta$, the SWAP algorithm with only weak pulls returns the optimal set $\texttt{Out}=M_*$ and
\begin{align} \label{eq:weak_theorembound}
T\leq O\left(\sigma^2\width(\mathcal{M})^2\mathbf{H}\log(nR^2\mathbf{H}/\delta)\right)
\end{align}
where $T$ denotes the number of samples used by the SWAP algorithm, $\mathbf{H}$ is defined in Eq.\ref{eq:hardness}.
\end{theorem}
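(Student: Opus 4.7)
The plan is to mirror the analysis that Chen et al.\ give for CLUCB, since SWAP with only weak pulls is exactly that algorithm with $s=j=1$ (so $\Cost_t = t$ and the radius reduces to $\sigma\sqrt{2\log(4nt^3/\delta)/T_t(a)}$). First I would define the ``clean'' event
\[
\mathcal{E} = \bigl\{\forall t \geq n,\ \forall a \in [n]:\ |\hat{u}_t(a) - u(a)| \leq \rad_t(a)\bigr\}.
\]
A standard sub-Gaussian tail bound plus a union bound over $t$ and $a$, combined with the $4nt^3/\delta$ factor inside the logarithm, gives $\Pr[\mathcal{E}] \geq 1 - \delta$ (the cubic in $t$ is what makes the union bound over times summable).

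Next I would argue correctness on $\mathcal{E}$. On this event, for every $t$ the worst-case vector $\tilde{\mathbf{u}}_t$ defined in lines 9--10 of Algorithm~\ref{alg:swap} dominates the truth in the direction that penalizes the current guess $M_t$, i.e.\ $w(\tilde{M}_t) \geq w(M^*)$ and $w(M_t) \leq w(M^*) \leq w(\tilde{M}_t)$ whenever $M_t \neq M^*$. Hence if the termination condition $w(\tilde{M}_t) = w(M_t)$ fires, we must have $M_t = M^*$, proving the correctness half of the theorem.

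For the sample complexity, the key structural lemma to establish is that on $\mathcal{E}$ and prior to termination, the chosen arm $p_t \in (\tilde{M}_t \setminus M_t) \cup (M_t \setminus \tilde{M}_t)$ satisfies
\[
\rad_t(p_t) \;\geq\; \frac{\Delta_{p_t}}{2\,\width(\mathcal{M})}.
\]
This is where $\width(\mathcal{M})$ enters: because the symmetric difference contains at most $\width(\mathcal{M})$ many arms on each side (by minimality of the width over feasible pairs) and $p_t$ is the one with the largest radius, the sum of radii over the symmetric difference is controlled by $\width(\mathcal{M}) \cdot \rad_t(p_t)$, and this sum must cover the suboptimality gap of $p_t$ or else $\tilde{M}_t$ could not be the oracle's maximizer on $\tilde{\mathbf{u}}_t$. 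Inverting the radius formula $\rad_t(p_t) \geq \Delta_{p_t}/(2\width(\mathcal{M}))$ yields
\[
T_t(p_t) \;\leq\; \frac{8\sigma^2 \width(\mathcal{M})^2 \log(4nt^3/\delta)}{\Delta_{p_t}^2}.
\]
Summing over the arms that are ever pulled past initialization and bounding $t \leq T$ by the total gives $T \leq O\bigl(\sigma^2 \width(\mathcal{M})^2 \mathbf{H} \log(nT^3/\delta)\bigr)$; solving this implicit inequality for $T$ (using $T = \tilde{O}(\mathbf{H})$ as the leading term) produces the claimed bound.

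The main obstacle is the width-based gap lemma in the third paragraph: it requires a careful exchange-set argument showing that if $w(\tilde{M}_t) > w(M_t)$ then there must exist some arm in the symmetric difference whose confidence interval is wide enough to explain the full gap $\Delta_{p_t}$, and that the number of arms across which this gap is distributed is at most $\width(\mathcal{M})$. Everything else---the concentration, the termination correctness, and the final summation over $\Delta_a^{-2}$---is a routine adaptation of the CLUCB analysis once that structural lemma is in hand.
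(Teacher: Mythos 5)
Your proposal follows exactly the CLUCB analysis skeleton that the paper itself relies on: this theorem is imported from Chen et al.\ without a reproof, but the paper's own proofs of Theorems~\ref{thm:strong_only} and~\ref{thm:swap} use precisely your decomposition --- a clean-event concentration lemma (their Lemma~8), a termination-correctness lemma (their Lemma~9), the width-based radius lower bound (their Lemma~10), and an implicit inequality in $T$ solved by a contradiction on the constant. The only discrepancy is cosmetic: the paper invokes $\rad_{t_a}\geq \Delta_a/(3\width(\mathcal{M}))$ rather than your $\Delta_{p_t}/(2\width(\mathcal{M}))$, which changes nothing in the $O(\cdot)$ bound.
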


In this section, we formally prove the theorems discussed in our paper. Some lemmas we show directly feed from \citet{chen2014combinatorial}'s paper.
\subsection{Strong Arm Pull Problem}
The following maps to Lemma 8 in \citet{chen2014combinatorial}.
\begin{lemma}
\label{l:8}
Suppose that the reward distribution $\varphi_a$ is a $\sigma$-sub-Gaussian distribution for all $a\in[n]$. And if, for all $t > 0$ and all $a\in[n]$, the confidence radius $\rad_t(a)$ is given by
$$
\rad_t(a)=\sigma\sqrt[]{\frac{2\log\left(\frac{4nt^3j^3}{\delta}\right)}{T_t(a)}}
$$
where $T_t(a)$ is the number of samples of arm $a$ up to round $t$. Since $s>1$ the number of samples in a single strong pull will be $s$ each with cost $j$. Then, we have
$$
\Pr \left[\bigcap^{\infty}_{t=1}\xi_t\right] \geq 1-\delta.
$$
\end{lemma}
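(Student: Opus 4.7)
The plan is to follow the template of Chen et al.'s Lemma 8, modified so that the log factor $4nt^3/\delta$ is replaced by $4nt^3j^3/\delta$ in order to absorb the added cost per round of strong pulls. I would define the good event $\xi_t = \{\forall a\in[n]: |\hat{u}_t(a)-u(a)|<\rad_t(a)\}$ and upper-bound $\Pr[\bigcup_{t\geq 1}\neg\xi_t]$ through three nested union bounds: over the random sample count $T_t(a)$ for a fixed arm, over the $n$ arms, and over all rounds $t\geq 1$.

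The first step is the pointwise sub-Gaussian tail bound. For a fixed arm $a$ and a fixed deterministic sample count $m$, the $\sigma$-sub-Gaussian assumption gives $\Pr[|\hat{u}^{(m)}_a - u(a)| \geq \sigma\sqrt{2\log(4nt^3j^3/\delta)/m}] \leq \delta/(2nt^3 j^3)$, where $\hat{u}^{(m)}_a$ denotes the empirical mean of the first $m$ i.i.d.\ rewards drawn from $\varphi_a$. The $s$ rewards returned by a single strong pull are themselves i.i.d.\ draws, so they can be identified with consecutive entries of a canonical stream $\{Y_i^{(a)}\}_{i\geq 1}$.

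Second, I would union-bound over the realized value of $T_t(a)$. Each of the (at most) $t$ rounds issues a single strong pull contributing $s$ samples, and the initialization contributes one weak pull per arm, so $T_t(a)$ takes values of the form $1 + s k$ with $k \leq t - n$; hence at most $t$ distinct values. Applying the pointwise bound across all feasible $m$ and then union-bounding over the $n$ arms yields $\Pr[\neg\xi_t]\leq \delta/(2t^2 j^3)$. Summing across $t\geq 1$ and using $\sum_{t\geq 1}1/t^2 \leq 2$ together with $j\geq 1$ gives $\Pr[\bigcup_{t\geq 1}\neg\xi_t]\leq \delta/j^3 \leq \delta$, which is the advertised bound.

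The main obstacle is the second step: because $T_t(a)$ is measurable with respect to the algorithm's random history, applying a pointwise Hoeffding bound at the random index $T_t(a)$ and then unioning over $m$ requires the right measurability setup. The clean way to handle this is to fix, for each arm, an exogenous i.i.d.\ reward stream $\{Y_i^{(a)}\}_{i\geq 1}$ and observe that $\hat{u}_t(a)$ is deterministically determined by this stream and the history-measurable index $T_t(a)$; the union bound over $m\leq t$ then dominates the event of interest without requiring any martingale machinery. Once this is in place, the choice of the $j^3$ factor inside the log is exactly what makes the final triple union bound collapse to $\delta$: any smaller power of the per-round cost would leave the round-wise sum divergent, while any larger one would inflate the radius unnecessarily.
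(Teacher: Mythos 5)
Your proof follows essentially the same route as the paper's: a pointwise sub-Gaussian/Hoeffding tail bound at each feasible deterministic sample count, a union bound over the at most $t$ realized values of $T_t(a)$, then union bounds over the $n$ arms and over all rounds, giving $\Pr[\neg\xi_t]\le \delta/(2t^2j^3)$ and a convergent sum bounded by $\delta$ (the paper evaluates $\sum_t t^{-2}=\pi^2/6$ where you use $\le 2$, and it writes $T_t(a)=bs$ where you more carefully write $1+sk$, but these are immaterial). The one slip is your closing aside that any smaller power of $j$ inside the logarithm would make the sum over rounds diverge --- convergence comes entirely from the $t^3$ factor, and the $j^3$ merely supplies extra slack since $j\ge 1$ --- but this side remark does not affect the validity of the argument.
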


\begin{proof}
Fix any $t>0$ and $a\in[n]$. Note that $\varphi_a$ is a $\sigma$-sub-Gaussian tail distribution with mean $w(a)$ and $\bar{w}_t(a)$ is the empirical mean of $\varphi_a$ from $T_t(a)$ samples.
\begin{subequations}
\label{eq:optim}
\begin{align}
&\Pr\left[|\bar{w}_t(a)-w_t(a)| \geq \sigma\sqrt[]{\frac{2\log\left(\frac{4nt^3j^3}{\delta}\right)}{T_t(a)}} \right] \nonumber \\
&=\sum_{b=1}^{t-1} \Pr\left[|\bar{w}_t(a)-w_t(a)| \geq \sigma\sqrt[]{\frac{2\log\left(\frac{4nt^3j^3}{\delta}\right)}{bs}}, T_t(a)=bs \right] \label{eq:l8t}\\
& \leq \sum_{b=1}^{t-1} 2\exp\left(\frac{-bs\left(\sigma\sqrt[]{\frac{2\log\left(\frac{4nt^3j^3}{\delta}\right)}{bs}}\right)^2}{2\sigma^2}\right) \label{eq:l8hoeffding} \\
& = \sum_{b=1}^{t-1} \frac{\delta}{2nt^3j^3} \nonumber \\
& \leq \frac{\delta}{2nt^2j^3} \label{eq:l8unionbound}
\end{align}
\end{subequations}
where Eq.\ref{eq:l8t} follows from the fact that $1\leq T_t(a)/s \leq t-1$ and Eq.\ref{eq:l8hoeffding} follows from Hoeffding's inequality.  By a union bound over all $a\in [n]$, we see that $\Pr[\xi_t]\geq 1-\frac{\delta}{2t^2j^3}$. Using a union bound again over all $t>0$, we have
\begin{align*}
\Pr\left[\bigcap_{t=1}^{\infty}\xi_t\right] & \geq 1 - \sum_{t=1}^{\infty}\Pr[\neg\xi_t] \\
& \geq 1 - \sum_{t=1}^{\infty}\frac{\delta}{2t^2j^3} \\
& = 1 - \frac{\pi^2}{12j^3}\delta \\
& \geq 1-\delta
\end{align*}
\end{proof}

The rest of the lemmas in \citet{chen2014combinatorial}'s paper hold. We can now prove Theorem \ref{thm:strong_only_ap}

\begin{theorem} \label{thm:strong_only_ap}
 Given any $\delta \in (0,1)$, any decision class $\mathcal{M}\subseteq 2^{[n]}$, and any expected rewards $\mathbf{w} \in \mathbb{R}^n$, assume that the reward distribution $\varphi_a$ for each arm $a\in[n]$ has mean $w(a)$ with an $\sigma$-sub-Gaussian tail. Let $M_*=\arg\max_{M\in\mathcal{M}}w(M)$ denote the optimal set. Set $\rad_t(a)=\sigma\sqrt[]{2\log\left(\frac{4nt^3j^3}{\delta}/T_t(a)\right)}$ for all $t>0$ and $a\in[n]$. Then, with probability at least $1-\delta$, the CLUCB algorithm with only strong pulls where $j\geq1$ and $s>j$ returns the optimal set $\texttt{Out}=M_*$ and
\begin{align} \label{theorembound}
T\leq O\left(\frac{\sigma^2\width(\mathcal{M})^2\mathbf{H}\log(nj^3R^2\mathbf{H}/\delta)}{s}\right)
\end{align}
where $T$ denotes the number of samples used by the CLUCB algorithm, $\mathbf{H}$ is defined in Eq.\ref{eq:hardness}.
\end{theorem}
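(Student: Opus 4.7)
The plan is to mirror the correctness and sample-complexity analysis of Chen et al.\ for CLUCB, adapting each step to the strong-only regime where a single pull of arm $a$ advances the information counter $T_t(a)$ by $s$ and the cost counter $\Cost_t$ by $j$. Since SWAP in the strong-only mode is algorithmically identical to CLUCB modulo these two constants, the structural skeleton of the proof should carry over; the work lies in tracking the new $s$ and $j$ factors through the confidence bound and through the final pigeonhole step.

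First, I would invoke Lemma~\ref{l:8} to obtain the high-probability event $\xi = \bigcap_{t\geq 1}\xi_t$ under which $|\hat{u}_t(a) - u(a)| \leq \rad_t(a)$ uniformly in $t$ and $a$. The $j^3$ factor inside the logarithm of $\rad_t(a) = \sigma\sqrt{2\log(4nt^3j^3/\delta)/T_t(a)}$ arises because, with strong pulls, $\Cost_t = tj$, so the union bound is over cost-indexed rounds; this is precisely what Lemma~\ref{l:8} already establishes, giving $\Pr[\xi] \geq 1 - \delta$. Second, for correctness I would replicate Chen et al.'s argument verbatim on the event $\xi$: because the oracle returns optimizers under both the empirical utilities (producing $M_t$) and the pessimistic utilities $\tilde{u}_t$ (producing $\tilde{M}_t$), the termination test $w(\tilde{M}_t) = w(M_t)$ forces $w(M_t)$ to sandwich $w(M^*)$ on the event $\xi$, and hence $M_t = M^*$. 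This step does not see $s$ or $j$ at all, so no new argument is needed.

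Third, for the cost bound I would redo Chen et al.'s peeling argument on the pulled arm $p_t$. Under $\xi$, whenever SWAP has not yet halted, the arm $p_t \in (M_t \setminus \tilde{M}_t) \cup (\tilde{M}_t \setminus M_t)$ satisfies a lower bound on its confidence radius of the form $\rad_t(p_t) \geq \Delta_{p_t}/(2\width(\mathcal{M}))$, by the same exchange argument Chen et al.\ use. Inverting the definition of $\rad_t(p_t)$ yields $T_t(p_t) \leq O\!\bigl(\sigma^2 \width(\mathcal{M})^2 \log(nj^3 R^2\mathbf{H}/\delta)/\Delta_{p_t}^2\bigr)$ before $p_t$ can cease to be the most uncertain boundary arm. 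Because one strong pull raises $T_t(p_t)$ by $s$ while charging $j$ to $\Cost_t$, the number of strong pulls of arm $a$ is at most $1/s$ times the CLUCB count and the cost is at most $j/s$ times it; summing $\Delta_a^{-2}$ over $a$ produces $\mathbf{H}$, so the total cost is $O\!\bigl(j\sigma^2\width(\mathcal{M})^2\mathbf{H}\log(nj^3R^2\mathbf{H}/\delta)/s\bigr)$, and under the hypothesis $s > j \geq 1$ the factor $j$ is absorbed into the $O(\cdot)$ to match the stated bound.

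The main obstacle is closing the self-referential loop in the logarithm: $\rad_t$ depends on $\log(\Cost_t^3)$, yet $\Cost_t$ itself is what we are trying to bound. I would resolve this with the same self-bounding trick used by Chen et al., observing that at termination $\Cost_t \leq T$ and $T$ appears only inside a logarithm, so upper-bounding $\log\Cost_t$ by $\log T$ and solving the resulting inequality in $T$ yields the $\log(nj^3 R^2 \mathbf{H}/\delta)$ term up to constants. Everything else is essentially bookkeeping inherited from the CLUCB proof.
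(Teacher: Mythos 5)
Your proposal follows essentially the same route as the paper's proof: Lemma~\ref{l:8} (Hoeffding with $T_t(a)$ advancing in increments of $s$, then a union bound whose slack produces the $j^3$ term) for the event $\xi$, Chen et al.'s Lemma~9 for correctness on $\xi$, their Lemma~10 to lower-bound $\rad_{t_a}(a)$ by $\Delta_a/(3\width(\mathcal{M}))$ and invert for $T(a)$, the conversion $T=\frac{1}{s}\sum_{a}T(a)$ to pick up the $1/s$ factor, and the same self-bounding contradiction argument to replace $\log T$ by $\log(nj^3\Htilde/\delta)$. The one wrinkle is your final step of ``absorbing'' the factor $j$ using $s>j\geq 1$, which is not legitimate for unbounded $j$; it is also unnecessary here, since the theorem's $T$ counts samples (pulls), exactly the quantity $\frac{1}{s}\sum_a T(a)$ bounded in the paper, so the $1/s$ accounting alone already yields the stated bound without any $j$ in the numerator.
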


\begin{proof}
Lemma \ref{l:8} indicates that the event $\xi \triangleq \bigcap_{t=1}^{\infty}\xi_t$ occurs with probability at least $1-\delta$. In the rest of the proof, we shall assume that this event holds.

By using Lemma 9 from \citet{chen2014combinatorial} and the assumption on $\xi$, we see that $\texttt{Out}=M_*$. Next, we focus on bounding the total number of $T$ samples.

Fix any arm $a\in[n]$. Let $T(a)$ denote the total information gained from pulling arm $a\in[n]$. Let $t_a$ be the last round which arm $a$ is pulled, which means that $p_{t_a}=e$. It is easy to see that $T_{t_a}(a)=T(a)-s$. By Lemma 10 from chen et. al., we see that $\rad_{t_a}\geq\frac{\Delta_a}{3\width(\mathcal{M})}$. Using the definition of $\rad_{t_a}$, we have
\begin{equation}
\label{eq:thm_strong_1}
\frac{\Delta_a}{3\width(\mathcal{M})}
\leq \sigma \sqrt[]{\frac{2\log(4nt_{a}^{3}j^3/\delta)}{T(a)-s}}
\leq \sigma \sqrt[]{\frac{2\log(4nT^{3}j^3/\delta)}{T(a)-s}}.
\end{equation}
By solving Eq.\ref{eq:thm_strong_1} for $T(a)$, we obtain
\begin{equation} \label{eq:thm_strong_arm_bound}
T(a)\leq \frac{18\width(\mathcal{M})^2\sigma^2}{\Delta_{a}^{2}} \log(4nT^3j^3/\delta)+s
\end{equation}
Define $\Htilde=\max\{\width(\mathcal{M})^2\sigma^2\mathbf{H},1\}$. Using similar logic to \citet{chen2014combinatorial} and the fact that the information gained per pull is $s$, we show that
\begin{equation} \label{eq:thm_strong_T_bound}
T\leq \frac{499\Htilde\log(4nj^3\Htilde/\delta)}{s} +2n
\end{equation}
Theorem \ref{thm:strong_only} follows immediately from Eq. \ref{eq:thm_strong_T_bound}.

If $n\geq \frac{1}{2}T$, then $T \leq 2n$ and Eq. \ref{eq:thm_strong_T_bound} holds. For the second case we assume $n < \frac{1}{2} T$.  Since $T>n$, we write
\begin{equation} \label{eq:thm_strong_CT}
T = \frac{C \Htilde\log\left(4nj^3\Htilde/\delta\right)}{s}+n,\text{ for some }C>0.
\end{equation}
If $C<499$, then Eq. \ref{eq:thm_strong_T_bound} holds. Suppose, on the contrary, that $C > 499$. We know that $T=\frac{1}{s}\sum_{a\in [n]}T(a)$. Using this fact and summing Eq. \ref{eq:thm_strong_arm_bound} for all $a\in [n]$, we have
\begin{eqnarray}
T & \leq & \frac{1}{s} \left(ns + \sum_{a\in[n]} \frac{18\width(\mathcal{M})^2\sigma^2}{\Delta_a^2} \log(4nj^3T^3/\delta) \right) \nonumber \\
& \leq & n + \frac{18\Htilde\log(4nj^3T^3/\delta)}{s} \nonumber \\
& = & n + \frac{18\Htilde\log(4nj^3/\delta)}{s} + \frac{54\Htilde\log(T)}{s} \nonumber \\
& \leq & n + \frac{18\Htilde\log(4nj^3/\delta)}{s}  \nonumber \\
& & + \frac{54\Htilde\log(2C\Htilde\log(4nj^3\Htilde/\delta))}{s} \label{eq:thm_strong_reason1} \\
& = & n + \frac{18\Htilde\log(4nj^3/\delta)}{s} \nonumber \\
& & + \frac{54\Htilde\log(2C)}{s} + \frac{54\Htilde\log(\Htilde)}{s} \nonumber \\
& & +  \frac{54\Htilde\log\log(4nj^3\Htilde/\delta)}{s} \nonumber \\
& \leq & n + \frac{18\Htilde\log(4nj^3\Htilde/\delta)}{s} \nonumber \\ 
& & + \frac{54\Htilde\log(2C)\log(4nj^3\Htilde/\delta)}{s} \nonumber \\
& & + \frac{54\Htilde\log(4nj^3\Htilde/\delta)}{s} + \frac{54\Htilde\log(4nj^3\Htilde/\delta)}{s} \label{eq:thm_strong_reason2} \\
& = & (126 + 54\log(2C))\frac{\Htilde\log(4nj^3\Htilde/\delta)}{s} \nonumber \\
& < & n + \frac{C\Htilde\log(4nj^3\Htilde/\delta)}{s} \label{eq:thm_strong_reason3} \\
& = & T , \label{eq:thm_strong_contra}
\end{eqnarray}
where Eq. \ref{eq:thm_strong_reason1} follows from Eq. \ref{eq:thm_strong_CT} and the assumption that $n < \frac{1}{2}T$; Eq. \ref{eq:thm_strong_reason2} follows from $\Htilde \geq 1$, $j \geq 1$, and $\delta < 1$; Eq. \ref{eq:thm_strong_reason3} follows since $126+54\log(2C)<C$ for all $C > 499$; and Eq. \ref{eq:thm_strong_contra} is due to Eq. \ref{eq:thm_strong_CT}. So Eq. \ref{eq:thm_strong_contra} is a contradiction. Therefore $C \leq 499$ and we have proved Eq. \ref{eq:thm_strong_T_bound}.
\end{proof}

\begin{corollary}\label{cor:ap_strong_better_than_weak}
SWAP with only strong pulls is equally or more efficient than SWAP with only weak pulls when $s>0$ and $0<j\leq C^{\frac{s}{3}-\frac{1}{3}}$ where $C=4n\tilde{\mathbf{H}}/\delta$.
\end{corollary}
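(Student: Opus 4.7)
The plan is to directly compare the explicit (non-asymptotic) cost bounds produced by the two applicable theorems and to solve for the range of $j$ where the strong-only bound does not exceed the weak-only bound. Concretely, I would start from Eq.~\ref{eq:thm_strong_T_bound} in the proof of Theorem~\ref{thm:strong_only_ap}, which gives, for the strong-only variant,
\begin{equation*}
T_{\textup{strong}} \;\leq\; \frac{499\,\Htilde\,\log(4nj^3\Htilde/\delta)}{s} + 2n,
\end{equation*}
and from the analogous specialization of Theorem~\ref{thm:weak_only} (i.e., setting $s=j=1$ in the same machinery that yields Eq.~\ref{eq:thm_strong_T_bound}), which gives
\begin{equation*}
T_{\textup{weak}} \;\leq\; 499\,\Htilde\,\log(4n\Htilde/\delta) + 2n.
\end{equation*}
The additive $2n$ term is identical on both sides and can be cancelled, so the definition of ``equally or more efficient'' reduces to requiring that the leading term of the strong-only bound be no larger than the leading term of the weak-only bound.

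Next I would simplify the resulting inequality
\begin{equation*}
\frac{\log(4nj^3\Htilde/\delta)}{s} \;\leq\; \log(4n\Htilde/\delta),
\end{equation*}
by splitting the logarithm on the left. Writing $C = 4n\Htilde/\delta$ as in the statement, this is
\begin{equation*}
3\log j + \log C \;\leq\; s\,\log C,
\end{equation*}
which rearranges to $3\log j \leq (s-1)\log C$, i.e., $\log j \leq \tfrac{s-1}{3}\log C$. Exponentiating both sides yields precisely $j \leq C^{(s-1)/3} = C^{s/3 - 1/3}$, matching the claimed bound. The hypothesis $s > 0$ (and implicitly $s \geq 1$ for the exponent to be meaningful, together with $j \geq 1$ for the strong-pull regime of Theorem~\ref{thm:strong_only_ap} to apply) ensures the manipulation is sign-preserving.

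The main obstacle is conceptual rather than technical: the corollary is a comparison of two \emph{upper bounds}, not of true performance, so I would make explicit in the write-up that the claim ``equally or more efficient'' is in the sense of the bounds derived in Theorem~\ref{thm:weak_only} and Theorem~\ref{thm:strong_only_ap}, and I would note that these bounds share the same constant $499$ and the same $+2n$ additive term, which is what allows the clean algebraic reduction. Everything else is a routine manipulation of logarithms, so I expect no further difficulty.
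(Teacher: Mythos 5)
Your proposal is correct and follows essentially the same route as the paper: both compare the explicit non-asymptotic bounds $\frac{499\Htilde\log(4nj^3\Htilde/\delta)}{s}+2n$ and $499\Htilde\log(4n\Htilde/\delta)+2n$, cancel the common $+2n$ term, reduce to $\frac{\log(Cj^3)}{s}\leq\log(C)$ with $C=4n\Htilde/\delta$, and solve for $j$. Your explicit caveat that this is a comparison of upper bounds rather than of true costs is a welcome clarification the paper leaves implicit.
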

\begin{proof}
\begin{align}
T_{strong} &\leq T_{weak} \nonumber \\
\frac{499\tilde{\mathbf{H}}\log(4nj^3\tilde{\mathbf{H}}/\delta)}{s} + 2n
&\leq 499\tilde{\mathbf{H}}\log(4nj^3\tilde{\mathbf{H}}/\delta) + 2n \nonumber \\
\frac{\log(Cj^3)}{s} &\leq \log(C) \label{eq:strong_weak}
\end{align}
Solving for Eq.\ref{eq:strong_weak} we get $s>0$ and $0<j\leq C^{\frac{s}{3}-\frac{1}{3}}$.
\end{proof}

\subsection{Strong Weak Arm Pull (SWAP)}
The following corresponds to Lemma 8 in work by the \citet{chen2014combinatorial}.
\begin{lemma} \label{lemma:swap_8}
Suppose that the reward distribution $\varphi_a$ is a $\sigma_1$-sub-Gaussian distribution for all $a\in[n]$. For all $t > 0$ and all $a\in[n]$, the confidence radius $\rad_t(a)$ is given by
$$
\rad_t(a)=\sigma_1\sqrt[]{\frac{2\log\left(\frac{4nCost_t^3}{\delta}\right)}{T_t(a)}}
$$
where $T_t(a)$ is the number of samples of arm $a$ up to round $t$. Since $s>1$, the number of samples in a single strong pull are $s$ each with cost $j$. Then, we have
$$
\Pr \left[\bigcap^{\infty}_{t=1}\xi_t\right] \geq 1-\delta.
$$
\end{lemma}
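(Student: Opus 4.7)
The plan is to mirror the proof of Lemma~\ref{l:8} in the strong-only appendix, adapting it to the mixed-pull setting where the radius depends on the random running cost $\Cost_t$ and where $T_t(a)$ can grow by either $1$ or $s$ per iteration. First I would fix an arm $a$ and an iteration $t$, and for any realized pair $(T_t(a), \Cost_t) = (m, c)$ apply Hoeffding's inequality to $\hat u_t(a)$, which is the empirical mean of $m$ independent $\sigma_1$-sub-Gaussian samples (a strong pull contributing $s$ such samples whose average is observed). Substituting the stated radius into the tail bound gives
\begin{equation*}
\Pr\bigl[|\hat u_t(a) - u(a)| \geq \rad_t(a) \bigm| T_t(a)=m,\; \Cost_t=c\bigr] \leq 2\exp(-\log(4nc^3/\delta)) = \frac{\delta}{2nc^3},
\end{equation*}
in direct analogy with Eq.~\ref{eq:l8hoeffding}.

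The next step is a careful union bound over realized states. Unlike the strong-only setting, where $T_t(a)$ is restricted to an arithmetic progression of length $t-1$ and the effective cost index is deterministic, here both quantities are data-dependent, and a naive enumeration over all possible $(m,c)$ at each $t$ would lose a factor of $s$. The fix is to exploit a monotonicity of the bad event $B(m,c) = \{|\bar X_{a,m} - u(a)| > \sigma_1\sqrt{2\log(4nc^3/\delta)/m}\}$: the threshold grows with $c$, so $B(m,c_1) \supseteq B(m,c_2)$ whenever $c_1 \leq c_2$, and since $\Cost_t$ is non-decreasing along the trajectory, the union over all iterations with $T_t(a)=m$ collapses to the single event indexed by the smallest realized cost $c_\star(m)$. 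I would then bound $c_\star(m) \geq n + (m-1)j/s$ by noting that each additional sample on arm $a$ costs at least $j/s$ (attained by a strong pull, which is the cheapest per-sample regime when $j<s$) on top of the initial cost $\Cost_n = n$.

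This reduces the per-arm failure probability to $\sum_{m \geq 1} \delta/(2n\,c_\star(m)^3)$, which by integral comparison converges; a final union bound over the $n$ arms, tracking constants analogously to Chen et al.'s union-over-$t$ step (with $\sum 1/t^2 = \pi^2/6$), yields $\Pr[\bigcap_{t=1}^\infty \xi_t] \geq 1 - \delta$. The main obstacle is precisely this two-dimensional union bound: converting it into a tractable one-dimensional series requires both the cubic factor $\Cost_t^3$ inside the logarithm of the radius and the monotonicity-plus-$c_\star(m)$ argument, without which the direct enumeration of $T_t(a)$ values used in the strong-only case would be off by a multiplicative $s$.
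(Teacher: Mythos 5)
Your argument is sound in outline but takes a genuinely different route from the paper's, and it has one piece of bookkeeping that does not close as stated. The paper's proof is a direct transcription of Chen et al.'s Lemma 8: fix an iteration $t$, enumerate the possible sample counts of arm $a$ at that iteration as a sum over $b=1,\dots,t-1$, apply Hoeffding to each term to get $\delta/(2n\,\mathit{AvCost}^3 t^3)$, union over arms, and then union over $t$ using $\sum_t t^{-2}=\pi^2/6$ together with $\mathit{AvCost}\geq 1$. You instead union over the realized sample count $m$ directly, collapsing the iteration index by the monotonicity of the bad event in the realized cost and the deterministic bound $c_\star(m)\geq n+(m-1)j/s$. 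This is a legitimate and arguably more careful decomposition --- you are right that in the mixed-pull setting $T_t(a)$ can take more than $t-1$ values at iteration $t$, a point the paper's own proof (and its treatment of the random quantity $\mathit{AvCost}$) glosses over. The gap is in your last step: the series you are left with is not $\sum_t t^{-2}$ but $\sum_{m\geq 1}\bigl(n+(m-1)j/s\bigr)^{-3}\approx n^{-3}+s/(2jn^2)$, so after the union over the $n$ arms the total failure probability is roughly $\delta/(2n^3)+\delta s/(4jn^2)$, which is at most $\delta$ only under an additional (mild but unstated) condition of the form $s=O(jn^2)$. So ``tracking constants analogously to the union-over-$t$ step'' does not go through verbatim; you must either carry out this computation and state the resulting condition, or take the simpler repair of observing that $\Cost_t\geq t$ deterministically (every iteration costs at least $1$), so the bad event under the random radius is contained in the bad event under the $t$-indexed radius, after which the paper's iteration-indexed union bound applies unchanged.
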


\begin{proof}
Fix any $t>0$ and $a\in [n]$. Note that $\varphi_a$ is $\sigma_1$-sub-Gaussian tail distribution with mean $w(a)$ and $\bar{w}(a)$ is the empirical mean of $\varphi_a$ from $T_t(a)$ samples. Then we have
\begin{align}
& \Pr\left[|\bar{w}_t(a)-w_t(a)| \geq \sigma_1\sqrt[]{\frac{2\log\left(\frac{4nCost_t^3}{\delta}\right)}{T_t(a)}} \right] \\
&=\sum_{b=1}^{t-1} \Pr\left[|\bar{w}_t(a)-w_t(a)| \geq \sigma_1\sqrt[]{\frac{2\log\left(\frac{4nCost_t^3}{\delta}\right)}{Gain_b}}\right] \label{eq:lem_swap_t}\\
& \leq \sum_{b=1}^{t-1} 2\exp\left(\frac{-Gain_b\left(\sigma_1\sqrt[]{\frac{2\log\left(\frac{4nCost_t^3}{\delta}\right)}{Gain_b}}\right)^2}{2R^2}\right) \label{eq:lem_swap_hoeffding} \\
& = \sum_{b=1}^{t-1} \frac{\delta}{2n AvCost^3t^3} \nonumber \\
& \leq \frac{\delta}{2nt^2 AvCost^3} \label{eq:lem_swapunionbound}
\end{align}
where $AvCost$ equal to the average cost until time $t$. Eq.\ref{eq:lem_swap_t} follows from $1\leq T_t(a)/Gain_t \leq t-1$ and Eq.\ref{eq:lem_swap_hoeffding} follows from Hoeffding's inequality.  By a union bound over all $a\in [n]$, we see that $\Pr[\xi_t]\geq 1-\frac{\delta}{2t^2AvCost_t^3}$. Using a union bound again over all $t>0$, we have
\begin{align*}
\Pr\left[\bigcap_{t=1}^{\infty}\xi_t\right] & \geq 1 - \sum_{t=1}^{\infty}\Pr[\neg\xi_t] \\
& \geq 1 - \sum_{t=1}^{\infty}\frac{\delta}{2t^2AvCost^3} \\
& = 1 - \frac{\pi^2}{12AvCost^3}\delta \\
& \geq 1-\delta
\end{align*}
\end{proof}

Given that the rest of the lemmas in the \citet{chen2014combinatorial} paper hold, we now prove the main theorem of our paper.

\begin{theorem}
Given any $\delta_1,\delta_2,\delta_3 \in (0,1)$, any decision class $\mathcal{M}\subseteq 2^{[n]}$ and any expected rewards $\mathbf{w} \in \mathbb{R}^n$, assume that the reward distribution $\varphi_a$ for each arm $a\in [n]$ has mean $w(a)$ with an $\sigma_1$-sub-Gaussian tail. Let $M_*=\arg\max_{M\in\mathcal{M}}w(M)$ denote the optimal set. Set $\rad_t(a)=\sigma_1\sqrt[]{2\log\left(\frac{4nCost_t^3}{\delta}/T_t(a)\right)}$ for all $t>0$ and $a\in[n]$, set $\epsilon_1=\sigma_2\sqrt[]{2\log\left(\frac{1}{2}\delta_2/T\right)}$, and set $\epsilon_2=\sigma_3\sqrt[]{2\log\left(\frac{1}{2}\delta_3/n\right)}$. Then, with probability at least $(1-\delta_1)(1-\delta_2)(1-\delta_3)$, the SWAP algorithm (Algorithm \ref{alg:swap}) returns the optimal set $\texttt{Out}=M_*$ and
\begin{equation}
\label{eq:swap_thm}
T\leq O\left(\frac{R^2\width(\mathcal{M})^2\mathbf{H}\log\left(nR^2\left(\bar{X}_{Cost}-\epsilon_1\right)^3\mathbf{H}/\delta\right)}{\bar{X}_{Gain}-\epsilon_2}\right),
\end{equation}
where $T$ denotes the number of samples used by Algorithm \ref{alg:swap}, $\mathbf{H}$ is defined in Eq. \ref{eq:hardness} and $\width(\mathcal{M})$ is defined by \citet{chen2014combinatorial}.
\end{theorem}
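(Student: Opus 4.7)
The strategy is to extend the template used in the proof of Theorem~\ref{thm:strong_only_ap} (which in turn mirrors Chen et al.'s analysis of CLUCB), while handling two new sources of randomness introduced by the probabilistic strong pull policy $\spp(s,j)$: the total cost $\Cost_t$ accumulated by round $t$ and the per-arm information gain $T_t(a)$ are now random variables rather than deterministic functions of the round index. I would organise the argument around three ``good events,'' one per confidence parameter $\delta_1,\delta_2,\delta_3$, and combine them with a union bound at the end to obtain the claimed $(1-\delta_1)(1-\delta_2)(1-\delta_3)$ success probability.

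\textbf{Step 1: the confidence event.} Let $\xi_t$ denote the event that $|\hat u_t(a)-u(a)|\le \rad_t(a)$ for every arm $a$. Lemma~\ref{lemma:swap_8} already establishes, via a Hoeffding bound and two nested unions (over arms and over rounds), that $\Pr[\bigcap_{t\ge 1}\xi_t]\ge 1-\delta_1$; the only subtlety relative to the strong-only case is that $\Cost_t$ appears inside the radius, which is precisely why the constant inside the logarithm is cubed in $\Cost_t$. Conditional on this confidence event, Chen et al.'s Lemma~9 (which is purely combinatorial and independent of the pull-type mix) guarantees that the termination test in line~\ref{line:equality} fires only when $M_t=M^*$, so correctness of $\texttt{Out}=M^*$ is inherited.

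\textbf{Step 2: bounding the number of pulls per arm.} Following the strong-only argument, fix an arm $a$ and let $t_a$ be the last round it is selected. Chen et al.'s Lemma~10, applied under $\xi_{t_a}$, yields $\rad_{t_a}(a)\ge \Delta_a/(3\,\width(\mathcal{M}))$; inverting the definition of $\rad_{t_a}$ produces an inequality of the shape $T_{t_a}(a) \le 18\sigma^2\width(\mathcal{M})^2\Delta_a^{-2}\log(4n\Cost_{t_a}^3/\delta_1)$. Summing this across $a\in[n]$ and absorbing a $+s$ slack per arm (for the last pull, which may be strong) gives an upper bound on the aggregate gain $\sum_a T_T(a)$ that matches the weak-only bound up to the $\log(\Cost_T^3/\delta_1)$ factor. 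The self-bounding trick used in the proof of Theorem~\ref{thm:strong_only_ap} (write $T=C\tilde{\mathbf H}\log(\cdot)+n$ and derive a contradiction for large $C$) then yields $\sum_a T_T(a)\lesssim \sigma^2\width(\mathcal{M})^2\mathbf{H}\log(n\Cost_T^3\mathbf{H}/\delta_1)$.

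\textbf{Step 3: translating gain to cost via concentration.} This is the step where the new parameters $\epsilon_1,\epsilon_2$ enter. Each individual pull incurs a random cost with expectation $\bar X_{\Cost}$ and provides a random information gain with expectation $\bar X_{\Gain}$, both bounded and independent across pulls (conditional on the arm selections). Two Hoeffding-type concentration arguments are needed: one showing $\Cost_T/T \ge \bar X_{\Cost}-\epsilon_1$ with probability at least $1-\delta_2$ (so that $\Cost_T$ inside the logarithm can be replaced by the lower bound $T(\bar X_{\Cost}-\epsilon_1)$), and a per-arm one showing $T_T(a)/N_T(a) \ge \bar X_{\Gain}-\epsilon_2$ simultaneously for all $a$ with probability at least $1-\delta_3$ (so that $\sum_a N_T(a) \le \sum_a T_T(a)/(\bar X_{\Gain}-\epsilon_2)$, and multiplying by the bound on $\Cost_T/T$ converts gain to cost). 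Substituting both concentration lower bounds into the end of Step~2 produces Eq.~\ref{eq:swap_thm}. The union bound over the three good events contributes the multiplicative $(1-\delta_1)(1-\delta_2)(1-\delta_3)$.

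\textbf{Anticipated difficulty.} The genuinely non-routine part is the bookkeeping for the two concentration steps, because the random variables involved are not i.i.d.\ in the naive sense: the sequence of pull decisions depends on the history, so $\Cost_t$ is a sum of conditionally independent (given the algorithm's trajectory) costs whose distribution is determined by $\spp(s,j)$ at each step, and similarly for $T_t(a)$. I would handle this by viewing the per-pull costs and gains as a martingale-difference sequence with bounded increments and invoking an Azuma--Hoeffding bound, which matches the form of $\epsilon_1,\epsilon_2$ in the statement. A minor but necessary check is that the $\log \Cost_T$ term introduced in Step~2 can indeed be replaced by $\log T(\bar X_{\Cost}-\epsilon_1)$ without blowing up the self-bounding argument used to close the recursion for $T$; this amounts to reapplying the ``$126+54\log(2C)<C$'' style estimate from the proof of Theorem~\ref{thm:strong_only_ap} with the updated logarithmic argument.
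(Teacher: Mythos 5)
Your proposal follows the paper's proof essentially step for step: the same three good events (the radius/confidence event from Lemma~\ref{lemma:swap_8}, plus Hoeffding concentration of the realized cost and of the realized gain around $\bar{X}_{\Cost}$ and $\bar{X}_{\Gain}$), the same inversion of $\rad_{t_a}$ via Chen et al.'s Lemmas~9 and~10, and the same $126+54\log(2C)<C$ self-bounding contradiction to close the recursion. The only quibble is the direction of the cost concentration --- to control $\log(\Cost_T^3)$ from above you need the upper tail $\Cost_T\le T(\bar{X}_{\Cost}+\epsilon_1)$, which is what the paper's derivation actually uses despite the $-\epsilon_1$ appearing in the theorem statement --- and your martingale-difference caveat about the history-dependence of the pull decisions is, if anything, more careful than the paper, which applies Hoeffding directly.
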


\begin{proof}
Lemma \ref{lemma:swap_8} indicates that the event $\xi \triangleq \bigcap_{t=1}^{\infty}\xi_t$ occurs with probability at least $1-\delta$. In the rest of the proof, we assume that this event holds.

Using Lemma 9 from \citet{chen2014combinatorial} and the assumption on $\xi$, we see that $\texttt{Out}=M_*$. Next, we bound the total number of $T$ samples.

Fix any arm $a\in[n]$. Let $T(a)$ denote the total information gained from pulling arm $a\in[n]$. Let $t_a$ be the last round which arm $a$ is pulled, which means that $p_{t_a}=a$. Trivially, $T_{t_a}(a)=T(a)-s$. By Lemma 10 from \citet{chen2014combinatorial}, we see that $\rad_{t_a}\geq\frac{\Delta_a}{3\width(\mathcal{M})}$. Using the definition of $\rad_{t_a}$, we have
\begin{align}
\label{eq:thm_swap_1}
\frac{\Delta_a}{3\width(\mathcal{M})} & \leq R\sqrt[]{\frac{2\log(4nCost_{t_a}^3/\delta)}{T(e)-Gain_{t_a}}} \nonumber \\
& \leq R\sqrt[]{\frac{2\log(4nCost_{T}^3/\delta)}{T(a)-Gain_{t_a}}}.
\end{align}
Solving for $T(a)$ in Eq. \ref{eq:thm_swap_1} we get
\begin{equation}
\label{eq:thm_swap_te_bound1}
T(a) \leq \frac{18\width(\mathcal{M})^2R^2}{\Delta_e^2}\log(4nCost_T^3/\delta) + Gain_{t_a}
\end{equation}

Define $\bar{X}_{Cost}=\mathbb{E}[Cost]$ as the expected cost of pulling an arm. Since we strong pull an arm with probability $\alpha=\frac{s-j}{s-1}$, we know
\begin{equation}
\bar{X}_{Cost}=\mathbb{E}[Cost_T]=\alpha j +(1-\alpha).
\end{equation}
Define $X_{Cost_t}$ as the cost of pulling an arm at time $t$. Assuming that each random variable $X_{Cost_t}$ is $R_1$-sub-Gaussian we can write the following using the Hoeffding inequality,
\begin{equation}
\Pr\left(\left|\frac{1}{T}\sum_{t=1}^{T}C_{Cost_t}-\bar{X}_{Cost}\right| \geq \epsilon_1\right) \leq 2\exp\left(-\frac{T\epsilon_1^2}{2R_1}\right)
\end{equation}
If we set $\epsilon_1=R_1\sqrt[]{2log(\frac{1}{2}\delta_2)/T}$ then with probability $(1-\delta_2)$
\begin{equation}
\label{eq:thm_swap_cost_bound}
\frac{Cost_T}{T} \in \left(\bar{X}_{Cost}-\epsilon_1,\bar{X}_{Cost}+\epsilon\right).
\end{equation}
Combining Eq. \ref{eq:thm_swap_te_bound1} and Eq. \ref{eq:thm_swap_cost_bound} we get
\begin{equation}
\label{eq:thm_swap_te_bound2}
T(e) \leq \frac{18\width(\mathcal{M})^2R^2}{\Delta_e^2}\log(4n(\bar{X}_{Cost}-\epsilon_1)^3T^3/\delta) + Gain_{t_e}
\end{equation}

Define $\bar{X}_{Gain}=E[Gain]$ as the expected information gain from pulling an arm. Since we pull an arm with probability $\alpha$, we know that
\begin{equation}
\bar{X}_{Gain}=E[Gain]=\alpha s + (1-\alpha)
\end{equation}
Define $X_{Gain_t}$ as the information gain of pulling an arm at time $t$. Assuming that each random variable $X_{Gain_t}$ is $R_2$-sub-Gaussian we can write the following using the Hoeffding inequality.
\begin{equation}
\Pr\left(\left|\frac{1}{n}\sum_{e\in[n]}Gain_{t_e}-\bar{X}_{Gain}\right| \geq \epsilon_2\right) \leq 2\exp\left(\frac{-n\epsilon_2^2}{2R_2^2}\right)
\end{equation}
If we set $\epsilon_2=R_2\sqrt[]{2log(\frac{1}{2}\delta_3)/n}$ then with probability $(1-\delta_2)$
\begin{equation}
\label{eq:thm_swap_gain_bound}
\frac{\sum_{e\in[n]}Gain_{t_e}}{n} \in \left(\bar{X}_{Gain}-\epsilon_2,\bar{X}_{Gain}+\epsilon_2\right).
\end{equation}
Similarly to the proof for Theorem \ref{thm:strong_only}, define $\Htilde=\max\{\width(\mathcal{M})^2R^2\mathbf{H},1\}$. In the rest of the proof we will show that
\begin{equation}
\label{eq:thm_swap_T_bound}
T \leq \frac{499\Htilde\log\left(4n\left(\bar{X}_{Cost}+\epsilon_1\right)^3\Htilde/\delta\right)}{\bar{X}_Gain-\epsilon_2} + 2n
\end{equation}
Notice that theorem follows immediately from Eq. \ref{eq:thm_swap_T_bound}.

If $n\geq \frac{1}{2}T$, then Eq. \ref{eq:thm_swap_T_bound} holds. Let's then assume that $n<\frac{1}{2}T$. Since $T>n$, we can write
\begin{equation}
\label{eq:thm_swap_T_eq}
T=\frac{C\Htilde\log(4n(X_{Cost}+\epsilon_1)^3\Htilde/\delta}{\bar{X}_{Gain}-\epsilon_2} + n
\end{equation}
If $C \leq 499$ then Eq. \ref{eq:thm_swap_T_bound} holds. Suppose then that $C > 499$. Notice that $T=\sum_{a\in[n]}T(a)/Gain_{t_a}$. By summing up Eq. \ref{eq:thm_swap_te_bound2} for all $a\in[n]$ we have
\begin{eqnarray}
T & \leq & n + \sum_{a\in[n]}\frac{18\width(\mathcal{M})^2R^2\log(4n(\bar{X}_{Cost}+\epsilon_1)T^3/\delta}{\Delta_a^2Gain_{t_a}} \nonumber \\
& \leq & n + \frac{18\Htilde\log(4n(\bar{X}_{Cost}+\epsilon_1)^3T^3/\delta)}{\bar{X}_{Gain}-\epsilon_2} \label{eq:thm_swap_reason1}\\
& = & n + \frac{18\Htilde\log(4n(\bar{X}_{Cost}+\epsilon_1)^3/\delta)}{\bar{X}_{Gain}-\epsilon_2} + \frac{54\Htilde\log(T)}{\bar{X}_{Gain}-\epsilon_2} \nonumber \\
& \leq & n + \frac{18\Htilde\log(4n(\bar{X}_{Cost}+\epsilon_1)^3/\delta)}{\bar{X}_{Gain}-\epsilon_2} \nonumber \\
& & + \frac{54\Htilde\log(2c\Htilde\log(4n(\bar{X}_{Cost}-\epsilon_1)^3\Htilde/\delta))}{\bar{X}_{Gain}-\epsilon_2} \label{eq:thm_swap_reason2}\\
& = &  n + \frac{18\Htilde\log(4n(\bar{X}_{Cost}+\epsilon_1)^3/\delta)}{\bar{X}_{Gain}-\epsilon_2} + \frac{54\Htilde\log(2C)}{\bar{X}_{Gain}-\epsilon_2} \nonumber \\
& & + \frac{54\Htilde\log(\Htilde)}{\bar{X}_{Gain}-\epsilon_2} \nonumber \\
& & + \frac{54\Htilde\log\log(4n(\bar{X}_{Cost}+\epsilon_1)^3\Htilde/\delta)}{\bar{X}_{Gain}-\epsilon_2} \nonumber \\
& \leq & n + \frac{18\Htilde\log(4n(\bar{X}_{Cost}+\epsilon_1)^3\Htilde/\delta)}{\bar{X}_{Gain}-\epsilon_2} \nonumber \\
& & + \frac{54\Htilde\log(2C)\log(4n(\bar{X}_{Cost}+\epsilon_1)^3\Htilde/\delta)}{\bar{X}_{Gain}-\epsilon_2} \nonumber \\
& & + \frac{54\Htilde\log(4n(\bar{X}_{Cost}+\epsilon_1)^3\Htilde/\delta)}{\bar{X}_{Gain}-\epsilon_2} \nonumber \\
& & + \frac{54\Htilde\log(4n(\bar{X}_{Cost}+\epsilon_1)^3\Htilde/\delta)}{\bar{X}_{Gain}-\epsilon_2} \label{eq:thm_swap_reason3} \\
& = & n + (126+54\log(2C))\frac{\Htilde\log(4n(\bar{X}_{Cost}+\epsilon_1)^3\Htilde/\delta)}{\bar{X}_{Gain}-\epsilon_2} \nonumber \\
& < & n + \frac{C\Htilde\log(4n(\bar{X}_{Cost}+\epsilon_1)^3\Htilde/\delta)}{\bar{X}_{Gain}-\epsilon_2} \label{eq:thm_swap_reason4}\\
& = & T, \label{eq:thm_swap_contra} 
\end{eqnarray}
where Eq. \ref{eq:thm_swap_reason1} follows from Eq. \ref{eq:thm_swap_gain_bound}; Eq. \ref{eq:thm_swap_reason2} follows from Eq. \ref{eq:thm_swap_T_eq} and the assumption $n < \frac{1}{2}T$; Eq. \ref{eq:thm_swap_reason3} follows from $\Htilde \geq 1$, $\delta < 1$, and $\bar{X}_{Cost}+\epsilon \geq 1$; Eq. \ref{eq:thm_swap_reason4} follows since $126+54\log(2C)<C$ for all $C>499$; and Eq. \ref{eq:thm_swap_contra} is due to Eq. \ref{eq:thm_swap_T_eq}. So Eq. \ref{eq:thm_swap_contra} is a contradiction. Therefore $C \leq 499$ and we have proved Eq. \ref{eq:thm_swap_T_bound}.
\end{proof}

\section{Additional Details about the Admissions Decisions Classifier}\label{app:classifier}
    To effectively model the graduate admissions process, we needed a way to accurately represent whether a particular applicant will be admitted to the program. Using 3 years of previous admissions data, including letters of recommendation, we built a classifier modeling the graduate chair's decision for a particular applicant. The classifier's accuracy can be found in Table \ref{table:predictor}.
    
\begin{table}[h]
\centering
\begin{tabular}{|| l || r | r | r ||} 
 \hline
 Type & \% Correct & Precision & Recall \\ [0.5ex] 
 \hline\hline
 Ph.D. & 77.8\% & 61.1\% & 39.7\% \\ 
 Masters & 89.2\% & 13.1\% & 55.3\% \\
 Total & 85.5\% & 33.5\% & 42.0\% \\[1ex] 
 \hline
\end{tabular}
\caption{Current predictor results on the testing data}
\label{table:predictor}
\end{table}

Some general features from the application are GPA, GRE scores, TOEFL scores, area of interest (Machine Learning, Theory, Vision, and so on), previous degrees, and universities attended. We included country of origin since the nature of applications may vary in different regions due to cultural norms. Another basic feature included was sex. We included this to check if the classifier picked up on any biased decision making (with sex and region).

Other features were generated from automatically processing the recommendation letters. Text from the letters was pulled from pdfs and OCR for scanned letters. We then cleaned the raw text with NLTK, removing stop words and stemming text \cite{Bird06:NLTK}. One feature we chose was the length of recommendation letter, chosen after polling the admissions committee on what they thought would be important. \citet{schmader2007} used Latent Dirichlet Allocation (LDA) to find word groups in recommendation letters for Chemistry and Biochemistry students \cite{BleiLDA}. Their five word groups included standout words (excellen*, superb, outstanding etc.), ability words ( talent*, intell*, smart*, skill*, etc.), grindstone words (hardworking, conscientious, depend*, etc.), teaching words (teach, instruct, educat*, etc.), and research words (research*, data, study, etc.). We found that these word groups translated well to Computer Science students. Important words for acceptance were research words, standout words, and ability words. Letters that only included words from the teaching word group indicated a less useful recommendation letter. We used counts of the various word groups as a feature in the classifier.

\section{Additional Experimental Results}\label{app:experiments}

\subsection{Gaussian Experiments}

While running SWAP, we first compare where the general, varied-cost version of SWAP is better than SWAP with strong pulls only (Figure \ref{figure:better_than_strong}) and where it is better than SWAP with only weak pulls (Figure \ref{figure:better_than_weak}). We then noticed that there should be an optimal zone where the general version of SWAP would perform better than both of the trivial cases.

Both graphs examine the symmetric difference between the average cost values of SWAP and either Strong or Weak Pull only with different parameter values of $s$ and $j$. 
\begin{figure} 
\includegraphics[width=\columnwidth]{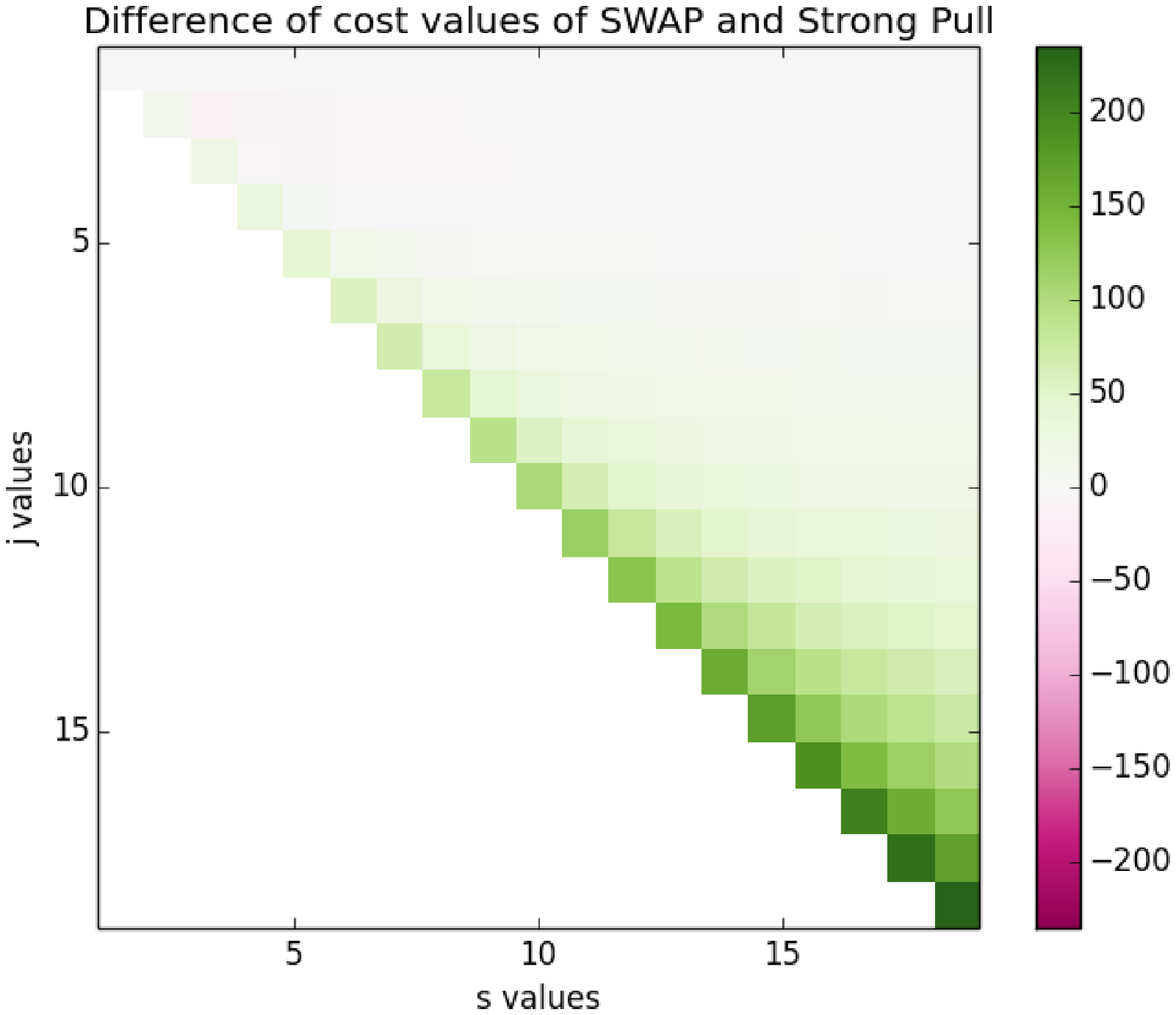}
% \centering
\caption{Heat map showing where SWAP is better than Strong Pull Only.}
\label{figure:better_than_strong}
% \end{figure}
% \begin{figure} 
\includegraphics[width=\columnwidth]{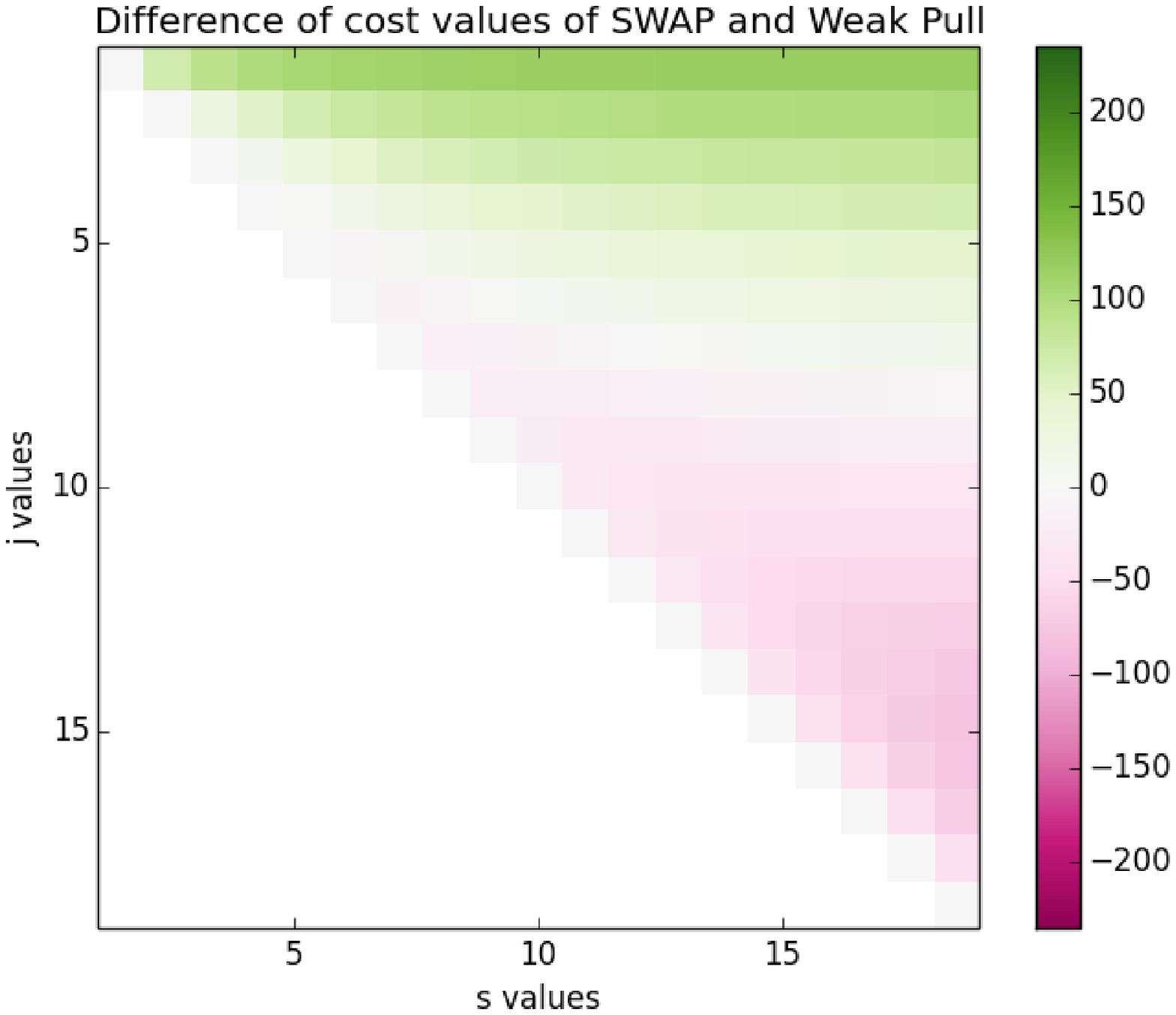}
\caption{Heat map showing where SWAP is better than Weak Pull Only.}
% \centering
\label{figure:better_than_weak}
\end{figure}

\subsection{Graduate Admissions Experiment}

We ran SWAP over both Masters and Ph.D. students over various values of $s$ (Figure \ref{figure:cost_over_s}). The total cost of running these experiments aligns with the resources spent during the actual admissions decision process. 

\begin{figure}
\includegraphics[width=\columnwidth]{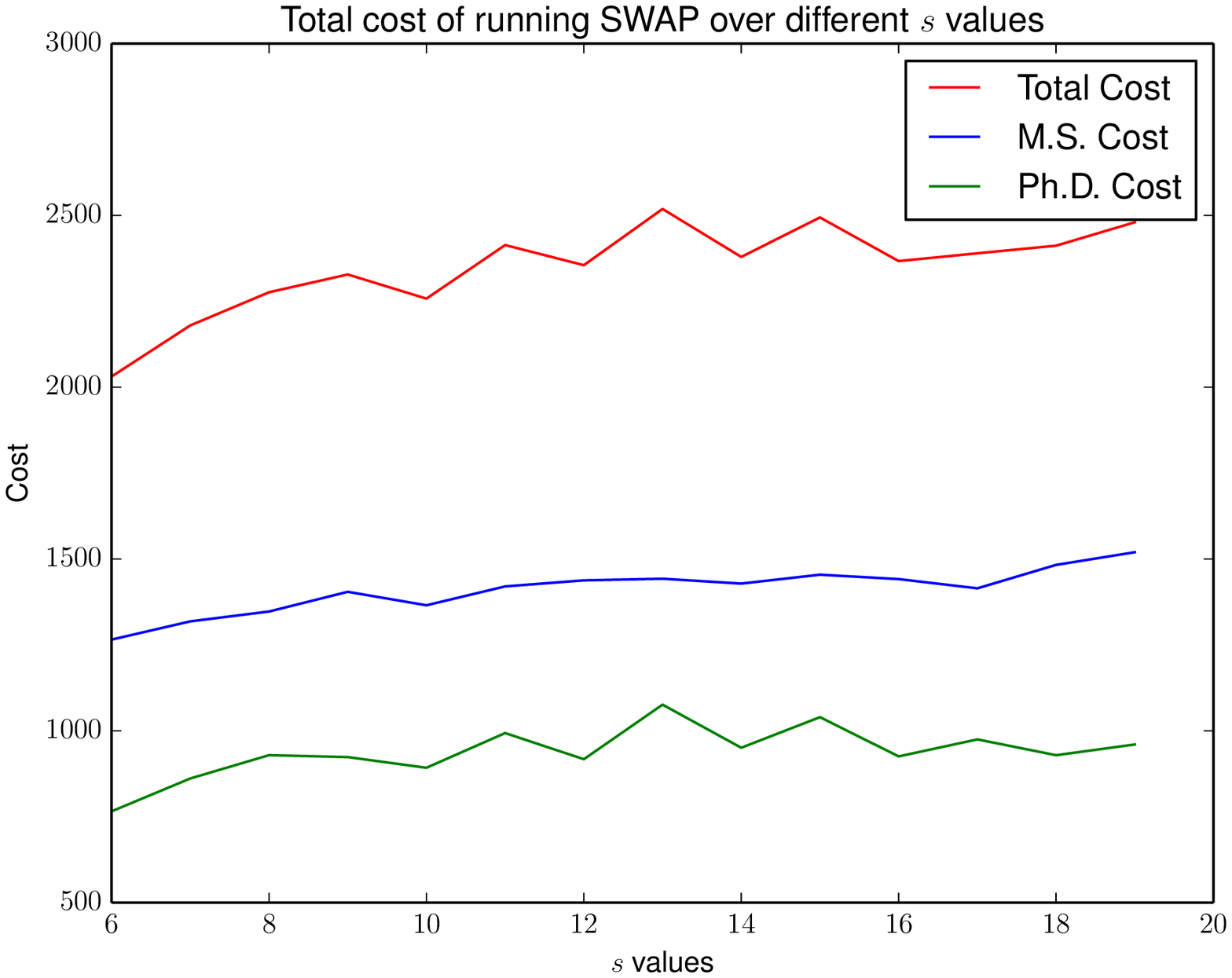}
\caption{Total cost of running SWAP over different $s$ values}
\centering
\label{figure:cost_over_s}
\end{figure}

When running SWAP experiments to formally promote diversity, one experiment not listed in the main paper was testing our diverse SWAP algorithm over an applicant's main choice of research area (Table \ref{table:area}). In practice, the applicants accepted already had a high diversity utility in regards to research area. SWAP slightly increased this diversity utility.

\begin{table}[h]
\centering
\begin{tabular}{ l | r r  r } 
  & General& Diversity \\ [0.7ex] 
 \hline\\
 SWAP & 8.3 (0.03)& 32.5 (0.03) \\ 
 Actual & 8.6 & 27.4 \\[1ex] 
\end{tabular}
\caption{SWAP's average gain in reported area of study diversity over our actual acceptances. The first column shows general fit utility and the second diversity utility The standard deviation over the experiments of SWAP can be found in parentheses.}
\label{table:area}
\end{table}

\end{document}